\newlength\myindent
\newcommand\bindent{%
  \begingroup
  \setlength{\itemindent}{\myindent}
  \addtolength{\algorithmicindent}{\myindent}
}
\newcommand\eindent{\endgroup}
\pgfplotsset{compat=1.18}
\newtheorem{lemma}{Lemma}
\newtheorem{theorem}{Theorem}
\newtheorem{assumption}{Assumption}
\newtheorem{corollary}{Corollary}
\DeclareMathOperator*{\argmin}{arg\,min}
\DeclarePairedDelimiter{\abs}{\lvert}{\rvert}
\title{QuACK: A Multipurpose Queuing Algorithm for Cooperative $k$-Armed Bandits}
\author{Benjamin Howson \and Sarah Filippi \and Ciara Pike-Burke}
\date{
    Department of Mathematics, Imperial College London\\[2ex]
    \today
}
\begin{document}
\maketitle

\begin{abstract}
We study the cooperative stochastic $k$-armed bandit problem, where a network of $m$ agents collaborate to find the optimal action. In contrast to most prior work on this problem, which focuses on extending a specific algorithm to the multi-agent setting, we provide a black-box reduction that allows us to extend any single-agent bandit algorithm to the multi-agent setting. Under mild assumptions on the bandit environment, we prove that our reduction transfers the regret guarantees of the single-agent algorithm to the multi-agent setting. These guarantees are tight in subgaussian environments, in that using a near minimax optimal single-player algorithm is near minimax optimal in the multi-player setting up to an additive graph-dependent quantity. Our reduction and theoretical results are also general, and apply to many different bandit settings. By plugging in appropriate single-player algorithms, we can easily develop provably efficient algorithms for many multi-player settings such as heavy-tailed bandits, duelling bandits and bandits with local differential privacy, among others. Experimentally, our approach is competitive with or outperforms specialised multi-agent algorithms.
\end{abstract}

\section{Introduction}
Stochastic multi-armed bandit problems are a fundamental model for sequential decision-making. Here, a single agent sequentially interacts with the environment by selecting an action and receiving a reward from an unknown distribution associated with that action \citep{Lattimore2020}. There are numerous provably efficient algorithms for this setting that employ different mechanisms for balancing the exploration-exploitation trade-off, such as optimism, posterior sampling, bootstrapping, and soft-max exploration \citep{Auer2002, Thompson1933, Kveton2019, Bian2022}.\\

However, decision-making tasks naturally arise in distributed settings, such as recommender systems and sensor networks \citep{Tekin2014, Long2011}, where there exist multiple decision-makers interacting with the environment. For example, large-scale recommender systems are often a distributed system of servers, where each server hosts a decision-maker. Every time a user arrives to one of the servers, the corresponding decision-maker chooses an item to recommend, and the user will provide a reward signal in response. Upon receiving feedback, the decision-makers can update their own knowledge on the quality of this item. Furthermore, they can share this information with neighbouring servers to improve future decision-making at all servers.\\ 

Motivated by these applications, we study an extension of the traditional multi-armed bandit model where there is a network of $m$ decision-makers who each interact with the same $k$-armed bandit environment. Each agent can communicate over the network, allowing for the possibility of collaboration to speed up the learning process.

\subsection{Related Work}
Designing provably efficient algorithms for the multi-player setting is more challenging than the single-agent setting. Previous work has extended \emph{specific} single-player algorithms to the multiplayer setting through one of two methods: gossiping or electing a leader.\\

\emph{Gossiping} is a popular technique in distributed computing that uses an iterative averaging procedure to combine information from neighbouring agents to approximate the full network information \citep{Xiao2004, Duchi2012}. \citet{Landgren2016} and \citet{Rubio2019} combine this procedure with the upper confidence bound algorithm \citep{Auer2002}. \citet{Lalitha2021} show that we can use gossiping to design a provably efficient multi-agent version of Thompson Sampling \citep{Thompson1933}. Notably, all of these algorithms are asymptotically optimal. However, the analyses of these algorithms are complex, and specific to the algorithm and variant of the gossiping procedure considered. Furthermore, the performance depends on the choice of communication matrix that the gossip procedure uses for iterative averaging. Even when the network is known, choosing this matrix is a non-trivial task, and it is common to use heuristics \citep{Xiao2004}.\\

The \emph{leader-based} methods require electing a leading agent who directs the exploration of the entire network. \citet{Bar-On2019} consider the non-stochastic setting and analyse the exponential weights algorithm with numerous leaders send their action distributions to the non-leading agents in their neighbourhood. \citet{Wang2020} elect one leading agent who performs all the exploration using a upper confidence bound algorithm. Here, the leader does not observe any information from the other agents in the network, who all play greedily with respect to the empirical means of the leader. Again, these papers analyse extensions of specific bandit algorithms. Our approach also fits into the category of leader-based approaches. However, we develop a leader-based black-box reduction where the leading agent can use any bandit algorithm for decision-making. Specifically, we build upon queuing methods commonly used to handle delayed feedback \citep{Joulani2013, Mandel2015}. However, extending these queuing methods to the multi-agents setting introduces unique theoretical challenges that do not arise in single-agent scenarios.\\

Additional works investigate variations of the multi-agent bandit problem. \citet{Szorenyi2013} study the multi-armed bandit problem in peer-to-peer networks where each agent communicates with two other agents in the network, which are chosen randomly in every round. \citet{Yang2021} and \citet{Chen2023} look into the setting where each agent on the network plays at different and possibly unknown times. \cite{madhushani2021one} study the setting where there is imperfect communication between agents. \cite{Shahrampour2017, Hossain2021} study the case where every agent has their own distribution over the rewards for each action. Perhaps the best-studied multi-agent problem is where agents receive zero or degraded rewards if they play the same action at the same time. See \citet{Boursier2024} for a recent survey on this topic.

\subsection{Contributions}
We design and analyse an algorithm that reduces cooperative stochastic multi-agent $k$-armed bandit problems to the single-agent version of the problem. Our main contributions are as follows: 
\begin{itemize}
    \item We propose a black-box reduction, QuACK, that accepts any single-agent bandit algorithm as input and immediately extends it to the multi-agent setting.
    \item Theorem \ref{theorem: group-regret} shows that we can upper bound the performance over the entire network in terms of the guarantees of the chosen single-agent bandit algorithm. Pairing our reduction with a near optimal algorithm yields a near optimal algorithm for the multi-agent $k$-armed subgaussian bandit problem, up to an additive graph-dependent quantity. These results are competitive or better than the case-by-case analyses of previous works. 
    \item Our theoretical guarantees hold under mild assumptions on the bandit environment. We require that the distribution of the reward depends only on the chosen action, and each distribution has a finite first moment. Hence, if there exists a provably efficient algorithm for a single-agent bandit problem, and this bandit problem satisfies the assumptions, our reduction guarantees that it will be provably efficient in the multi-agent setting.
    This makes developing provably efficient multi-agent algorithms for various bandit problems simple. In particular, in Section~\ref{section: instances-quack}:
    \begin{itemize}
        \item [\tiny$\bullet$] We demonstrate the simplicity by using our reduction to design multi-agent algorithms for heavy-tailed bandits and duelling bandits, which have all be considered separately in the literature  \citep{Landgren2016, Dubey2020, Raveh2024}. The resulting algorithms have comparable guarantees to those developed specifically for each setting.
        \item [\tiny$\bullet$] We develop provably efficient algorithms for new multi-agent settings, such as local differential privacy, by using an appropriate single-player algorithm \citep{Ren2020}.
    \end{itemize}
    \item Finally, we perform an experimental comparison to existing works which shows that our reduction is competitive or outperforms existing methods when paired with a comparable bandit algorithm.
\end{itemize}

\section{Problem Setting}\label{section: problem-setting}
Our paper considers multi-agent variants of stochastic multi-armed bandit problems where we have a finite set of actions: $\mathcal{A}$ such that $\abs{\mathcal{A}} = k$. We formalise the network of $m$ agents and their connections through an undirected graph $G = (V, E)$ with:
\begin{align*}
    V &= \{1, 2, \cdots, m\}\\
    E &\subseteq \left\{(v, w) \in V \times V: v \neq w\right\}.
\end{align*}
Here $v \in V$ represents an agent and $e \in E$  represents a communication channel between a pair of agents. Each agent in the network is allowed to communicate with any other agent in their \emph{neighbourhood}, which we define for all $v \in V$ as follows: 
$$
    N_{v} = \left\{w \in V: (v, w) \in E\right\}.
$$
Each round consists of every agent simultaneously playing their own action, receiving their own reward and communicating with their neighbours. Formally, for each $t \in \{1, 2, \cdots, n\}$:
\begin{itemize}
    \item Each agent $v \in V$ plays action $A_{t}^{v} \in \mathcal{A}$.
    \item Each agent $v \in V$ receives reward $X_{t}^{v} \in \mathbb{R}$.
    \item Each agent $v \in V$ communicates with $w \in N_{v}$.
\end{itemize}
Throughout, we will make use of a standard and mild assumption on the bandit environment \citep{Lattimore2020}. 
\begin{assumption}\label{assumption: bandit-environment}
    The bandit environment generates feedback that depends only on the chosen action. 
    Letting $P_{a}$ denote the reward distribution for action $a$, we assume that:
    $$
        X_{t}^{v} \,\vert\, A_{t}^{v} = a \stackrel{i.i.d.}{\sim} P_{a}\, \text{ and }\, \mu_{a} = \mathbb{E}_{X\sim P_{a}}\left[X \right] < \infty\;.
    $$
    for all $t \in \mathbb{N}$, $a\in \mathcal{A}$ and $v\in V$.
\end{assumption}
Additionally, we make a mild assumption on the graph and the communication protocol.
\begin{assumption}\label{assumption: graph-communication}
    Messages can be passed from one agent $v\in V$ to another agent $w \in V$ along a shortest path in $d_{vw}$ time steps, and the graph diameter is finite: $d = \max_{v, w\in V} d_{vw} < \infty$.
\end{assumption}

\subsection{Measuring Performance}
In the multi-agent bandit problem, agents collaborate to minimise the regret of the entire network, also known as the \emph{group regret}:
$$
    R_{G}\left(n\right) = \sum_{a \in \mathcal{A}} \Delta_{a}\,\mathbb{E}\left[\,\sum_{v = 1}^{m}T_{av}\left(n\right)\right].
$$
where the expectation is over the network of agents interacting with the bandit environment. Here, $\Delta_{a} = \mu_{\star} - \mu_{a}$ is the sub-optimality gap of action $a$ where $\mu_{\star} = \max \mu_{a}$ denotes the expected reward of the optimal action, and 
\begin{equation}\label{equation: real-plays}
    T_{av}\left(n\right) = \sum_{t = 1}^{n} 1\left\{A_{t}^{v} = a\right\}
\end{equation}
is the number of times agent $v \in V$ chooses action $a \in \mathcal{A}$ over the course of $n$ rounds. From standard arguments, we can deduce a minimax lower bound on the group regret when all reward distributions are Gaussian \citep{Lattimore2020}: 
$$
    R_{G}\left(n\right) \geq \frac{1}{27} \sqrt{m n \left(k - 1\right)}.
$$
This lower bound holds for cases where each agent can communicate immediately with any other agent on the graph. Hence, it may not be tight for graphs with particular structure. However, it does demonstrate that information sharing can be used to improve the group performance in the worst-case. 

\section{A Black-Box Reduction}\label{section: black-box-reduction}
We present QuACK, \emph{a queuing algorithm for cooperative $k$-armed bandits}. QuACK is a multipurpose black-box reduction that can be paired with any single-agent bandit algorithm to extend it to the multi-agent case.\\


QuACK requires the index of the \emph{leader} on the network and a bandit algorithm $\pi$ as input. The leader will use $\pi$ to select actions for the whole network. The leader begins by initialising an empty queue for each action. These queues will store the rewards that the other agents observe from the bandit environment and have passed to the leader. Intuitively, the leader can use the reward samples in these queues instead of playing actions in the real environment. Specifically, in each round, the leader observes the action suggested by $\pi$. If there is a reward in the queue for that action, the leader will take the reward from the queue and pass this to $\pi$. Importantly, the leader does not play this action, but instead uses the reward from the queue to update $\pi$. This continues until $\pi$ suggests playing an action whose queue is empty. In this case, the leader plays the action and receives a reward from the environment. The leader then communicates this decision to the other, \emph{follower}, agents. Once a follower receives instruction from the leader (note that this can take some time since the message needs to travel over the network), they will play the instructed action, and pass the reward back to the leader (which can also take some time). Once the leader receives rewards from their followers, they will place the rewards in the corresponding queues, and begin the process again.\\

Algorithm \ref{algorithm: our-algorithm} presents the pseudo-code for QuACK. For clarity, it is presented assuming that the leader is given as input and one shortest path between each follower $w \in V$ and the chosen leader $v\in V$ has been previously identified. Appendix \ref{appendix: additional-details} explains how we can elect a leader in a distributed manner and Section \ref{section: message-passing} suggests a well-known distributed algorithm for computing the paths once we have the leader. 

\begin{algorithm}[t]\caption{QuACK}\label{algorithm: our-algorithm}
    \begin{algorithmic}
        \STATE Input: index of the leader $v \in V$
        \STATE Input for Leader: bandit algorithm $\pi$
        \STATE Initialisation for Leader: $Q_{a} = \emptyset$  for all $a\in\mathcal{A}$
        \FOR{$t \in \{1, 2, \cdots, n\}$}
            \STATE \textbf{Leader ($v$)}
            \bindent
            \STATE Receive messages via shortest path:
            $$\mathcal{M}_{t} = \{(A_{t - d_{vw}}^{w}, X_{t - d_{vw}}^{w})\}_{w \neq v}
            $$
            \STATE Append $Q_{a} = Q_{a}\cup\{(a, x)$ for $(a, x) \in \mathcal{M}_t\}$
            \vspace{0.1cm}
            \STATE Select $a$ with $\pi$
            \WHILE{$Q_{a} \neq \emptyset$}
                \STATE Remove $x$ from $Q_{a}$
                \STATE Update $\pi$ with $(a, x)$
                \STATE Select $a$ with $\pi$
            \ENDWHILE
        \STATE Play $A_{t}^{v} = a$
        \STATE Observe $X_{t}^{v} \sim P_{A_{t}^{v}}$
        \STATE Send $A_{t}^{v}$ to all $w \neq v$ via shortest path
        \vspace{0.1cm}
        \eindent
        \STATE \textbf{Follower ($w\neq v$)}
        \bindent
        \IF{$t>d_{vw}$}
            \STATE Receive $A_{t - d_{vw}}^{v}$ from $v$ via shortest path
            \STATE Play $A_{t}^{w} = A_{t - d_{vw}}^{v}$
        \ELSE
            \STATE Play $A_{t}^{w}$ uniformly at random
        \ENDIF
        \STATE Observe $X_{t}^{w} \sim P_{A_{t}^{w}}$
        \STATE Send $(A_{t}^{w}, X_{t}^{w})$ to $v$ via shortest path.
        \eindent
        \ENDFOR
    \end{algorithmic}
\end{algorithm}

\subsection{Message-Passing Protocol}\label{section: message-passing}
The message a follower $w \neq v$ sends to the leader $v$ in round $t$ contains only the action and the reward: 
$$
    (A_{t}^{w}, X_{t}^{w}).
$$
Sending such simple messages is possible since there is a known fixed route between the leader and any follower. Computing these routes can be done in a distributed manner before the start of the interaction. Specifically, this amounts to constructing the shortest path tree of the graph which can be done by solving the single-source shortest-path problem before interacting with the bandit environment \citep{Ahuja1993}. Figure \ref{figure: shortest-path-tree} illustrates the shortest path tree for the grid graph, where the black vertex represents the leader.

\begin{figure}[h!]
    \centering
    \begin{tikzpicture}
        \node[circle, draw, fill = white] (zero) at (0, 0) {};
        \node[circle, draw, fill = gray] (one) at (1, 0) {};
        \node[circle, draw, fill = white] (two) at (2, 0) {};
        
        \node[circle, draw, fill = gray] (three) at (0, 1) {};
        \node[circle, draw, fill = black] (four) at (1, 1) {};
        \node[circle, draw, fill = gray] (five) at (2, 1) {};

        \node[circle, draw, fill = white] (six) at (0, 2) {};
        \node[circle, draw, fill = gray] (seven) at (1, 2) {};
        \node[circle, draw, fill = white] (eight) at (2, 2) {};

        \draw (zero) -- (one);
        \draw (one) -- (two);
        \draw (three) -- (four);
        \draw (four) -- (five);
        \draw (six) -- (seven);
        \draw (seven) -- (eight);

        \draw (zero) -- (three);
        \draw (one) -- (four);
        \draw (two) -- (five);
        \draw (three) -- (six);
        \draw (four) -- (seven);
        \draw (five) -- (eight);
    \end{tikzpicture}
    \hspace{1cm}
    \begin{tikzpicture}
      \node[circle, draw, fill=black] (root) at (0,0) {};
    
      \node[circle, draw, fill=gray] (child1) at (-6/4,-1) {};
      \node[circle, draw, fill=gray] (child2) at (-1/2,-1) {};
      \node[circle, draw, fill=gray] (child3) at (1/2,-1) {};
      \node[circle, draw, fill=gray] (child4) at (6/4,-1) {};
    
      \node[circle, draw, fill=white] (grandchild1) at (-6/4,-2) {};
      \node[circle, draw, fill=white] (grandchild2) at (-1/2,-2) {};
      \node[circle, draw, fill=white] (grandchild3) at (1/2,-2) {};
      \node[circle, draw, fill=white] (grandchild4) at (6/4,-2) {};
    
      \draw (root) -- (child1);
      \draw (root) -- (child2);
      \draw (root) -- (child3);
      \draw (root) -- (child4);
    
      \draw (child1) -- (grandchild1);
      \draw (child2) -- (grandchild2);
      \draw (child3) -- (grandchild3);
      \draw (child4) -- (grandchild4);
    \end{tikzpicture}
    \vspace{0.3in}
    \caption{Grid Graph and its Shortest Path Tree.}
    \label{figure: shortest-path-tree}
\end{figure}
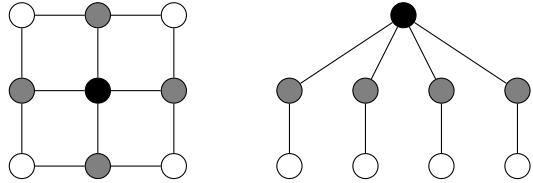

The Distributed Bellman-Ford algorithm can solve the single-source shortest-path problem exactly and it does so using approximately $m$ iterations, whilst only requiring that each agent knows their neighbours and has a unique identifier \citep{Ford1956, Bellman1958, Moore1959, Elkin2020}. Appendix \ref{appendix: additional-details} presents full details of this algorithm for completeness. \\

Sending instructions from the leader to the followers using the shortest path tree is straightforward. The leader communicates the action they played to their children, and the children send this action to their children at the end of the next round, and so forth. Using the shortest path tree also makes passing messages to the leader straightforward. Each follower will forward messages from their children to their parent, and send their own message to their parents. Thus, in the $t$-th round, each follower $w$ will perform the following communication:
\begin{itemize}
    \item Forward $A_{t}^{w} = A_{t - d_{vw}}^{v}$ to each $z \in \mathcal{C}_{w}$ 
    \item Send $M_{t}^{v} = (A_{t}^{w}, X_{t}^{w}) \cup \{M_{t - 1}^{z}\}_{z \in \mathcal{C}_{w}}$ to $\mathcal{P}_{z}$
\end{itemize}
Here, $\mathcal{C}_{w}$ and $\mathcal{P}_{w}$ denote the children and the parent of agent $w$ on the shortest path tree, respectively. The leader $v$ only needs to send the action they choose in each round to their children, which corresponds to just the first item the above list. Thus, the communication complexity, in terms of the number of messages sent in each round, for each agent $w$ is upper bounded by the diameter of the graph.\\

Note that we could use any other message-passing protocol that satisfies Assumption \ref{assumption: graph-communication}. For example, each agent could forward everything they have been sent in the past $m$ rounds. However, in this case, all messages $M_{t}^{w}$ would have to contain additional information about who originally sent the message and the time the message was sent, e.g. $w$ and $t$. This additional information is required so that the followers can identify the most recent action sent to them by the leader. Furthermore, the leader would have to check that $M_{t}^{w}$ is not added to the queue more than once. 

\subsection{Theoretical Analysis}
We now analyse QuACK (Algorithm~\ref{algorithm: our-algorithm}) with an arbitrary single player algorithm $\pi$. We will see that the group regret of QuACK depends on the single-agent regret of $\pi$ and some graph-dependent quantities. \\

We define the regret of the single-agent bandit algorithm $\pi$ over an $n$-round interaction with the single-player environment as follows: 
$$
    S_{\pi}\left(n\right) = \sum_{a \neq \star} \Delta_{a} \,\mathbb{E}_{P}\left[\sum_{t = 1}^{n} 1\{a_{t} = a\}\right]
$$
where $a_t$ denotes the action taken at time t when following $\pi$ and $\mathbb{E}_P[\cdot]$ denotes expectation with respect to the interaction with the single-player environment.\\

We will show that, by maintaining the queues, the leader creates a simulated single-player version of the environment for the bandit algorithm. For this, we need to define the number of times the single-agent bandit algorithm $\pi$ chooses action $a \in \mathcal{A}$ in totality. This counter includes the number of times the leader, who operates this single-player algorithm, chooses this action in the bandit environment, as well as the number of samples taken from the queue. Letting $a_{s}$ denote the $s$-th action chosen by the bandit algorithm allows us to define this counter: 
\begin{equation}\label{equation: pseudo-plays}
    T_{a}'\left(s_{n}\right) = \sum_{t = 1}^{n} \sum_{s = s_{t - 1} + 1}^{s_{t}} 1\left\{a_{s} = a\right\}\;.
\end{equation}
Here, $s_{t}$ represents the number of simulated rounds the bandit algorithm has completed by the end of the $t$-th round.
This includes the action chosen by the leader in this round, e.g. $A_{t}^{v} = a_{s_{t}}$. From Section \ref{section: message-passing}, we know that the leader appends each message to the queue exactly once, which implies that: $s_{n} \leq mn$.\\

Lemma \ref{lemma: simulation} shows that the bandit algorithm is operating in a simulated version of a single-player environment. 
\begin{lemma}\label{lemma: simulation}
    Under Assumption \ref{assumption: bandit-environment}, QuACK guarantees that, for all $n$:
    \begin{align*}
        S_{\pi}\left(n\right) = \sum_{a \neq \star} \Delta_{a}\, 
        \mathbb{E}\left[T_{a}'\left(n\right)\right]\;.
    \end{align*}
\end{lemma}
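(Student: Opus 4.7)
The plan is to show that the bandit algorithm $\pi$ running inside QuACK is statistically indistinguishable from $\pi$ running in a genuine single-player bandit environment. Once this distributional equivalence is established, the identity in the lemma is immediate: both $S_\pi(n)$ (by its definition) and $\sum_{a\neq \star}\Delta_a\mathbb{E}[T_a'(n)]$ (by the definition in equation~(\ref{equation: pseudo-plays})) reduce to $\sum_{a\neq\star}\Delta_a$ times the expected number of plays of $a$ by $\pi$ in its first $n$ internal decisions, and the equivalence ensures these two expectations are taken under the same law.

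Concretely, I would let $(a_s, x_s)_{s \geq 1}$ denote the sequence of action–reward pairs that $\pi$ actually consumes inside QuACK, where $a_s$ is the $s$-th action it proposes and $x_s$ is either the reward the leader obtains from the environment (if $Q_{a_s}$ is empty at that moment) or the reward popped from $Q_{a_s}$ (otherwise). The core claim, proved by induction on $s$, is that conditional on the $\sigma$-algebra $\mathcal{F}_{s-1}$ generated by $(a_1,x_1,\dots,a_{s-1},x_{s-1})$ and $\pi$'s internal randomness, and conditional on $a_s = a$, one has $x_s \sim P_a$ independently of $\mathcal{F}_{s-1}$. In case (i), $x_s = X_t^v$ is a direct draw from $P_a$ by Assumption~\ref{assumption: bandit-environment}, and is independent of the past by the i.i.d.\ structure of the environment. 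In case (ii), $x_s$ was originally generated by some follower $w$ playing action $a$ at some earlier round, and by Assumption~\ref{assumption: bandit-environment} it is again a $P_a$-sample, independent of $\mathcal{F}_{s-1}$, because the follower's reward is produced by an environment draw that does not see $\pi$'s history. This establishes that $(a_s,x_s)_{s\geq 1}$ has the same law as $\pi$'s trajectory against a single-player bandit, so for any $N\geq 1$,
$$\mathbb{E}\!\left[\sum_{s=1}^{N}1\{a_s = a\}\right] = \mathbb{E}_{P}\!\left[\sum_{s=1}^{N}1\{a_s = a\}\right].$$
Specialising to $N=n$ gives $\mathbb{E}[T_a'(n)] = \mathbb{E}_P[\sum_{s=1}^{n}1\{a_s=a\}]$, and multiplying by $\Delta_a$ and summing over $a\neq\star$ yields the lemma.

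The main obstacle will be justifying step (ii): queued rewards must be genuinely fresh and independent of everything $\pi$ has already processed. Two properties of the algorithm make this work and both should be called out explicitly. First, the pseudocode in Algorithm~\ref{algorithm: our-algorithm} removes each reward from $Q_a$ exactly once when $\pi$ consumes it, so no sample is ever reused; this is reinforced by the message-passing protocol of Section~\ref{section: message-passing}, which forwards each follower's $(A_t^w,X_t^w)$ to the leader exactly once. Second, the follower who generated a queued reward acted either uniformly at random or by executing a delayed instruction from the leader, and in either case its reward depends only on its chosen action together with a fresh environment draw, never on any state $\pi$ has reached. Combining these two facts is what licenses treating the queues, together with the leader's own plays, as an exchangeable stream of independent $P_a$-samples for each action $a$, which is precisely the single-player interface $\pi$ expects.
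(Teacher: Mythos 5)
Your proof is correct and establishes the same underlying fact as the paper --- that the trajectory $(a_s,x_s)_{s\ge 1}$ consumed by $\pi$ inside QuACK has the same law as $\pi$'s trajectory against the genuine single-player environment --- but it formalises this differently. The paper writes down the joint densities $p_\pi$ and $\tilde p_\pi$ as products of policy terms and per-step reward densities, asserts $P_a \stackrel{d}{=} \tilde P_a$ by appealing to exchangeability of i.i.d.\ variables, and then invokes uniqueness of Radon--Nikodym derivatives to equate the two densities pointwise. Your argument is the conditional-distribution form of that same factorisation: an induction on the internal decision index $s$ showing that, given $\mathcal{F}_{s-1}$ and $a_s=a$, the consumed reward $x_s$ is a fresh $P_a$-draw. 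These are two views of one identity, but your version is arguably the more careful one at the step that actually requires care: the paper compresses into the single word ``exchangeability'' the fact that a reward popped from $Q_a$ is still distributed as $P_a$ conditional on everything $\pi$ has seen, whereas you isolate and justify the two properties that make this true --- each sample is consumed exactly once, and the decision to pop depends only on queue occupancy (hence on past actions), never on the unrevealed reward values, so the selection cannot bias the sample. What the paper's route buys is brevity and a clean measure-theoretic statement; what yours buys is an explicit verification that the queue discipline does not contaminate the reward stream, which is precisely the claim a sceptical reader would want spelled out.
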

\begin{proof}
    Under Assumption \ref{assumption: bandit-environment}, the rewards for each action-agent pair are independent and identically distributed random variables. Properties of exchangeable random variables guarantee that the sequence of rewards the leading agent feeds to the bandit algorithm has the same distribution as in the single-player environment. Therefore, the expectation in the definition of the single-player regret and in the multi-agent setting are the same. See Appendix \ref{appendix: proof-simulation} for a formal proof.
\end{proof}
Recall that the group regret is defined in terms of times the network of agents plays each action. Therefore, Lemma \ref{lemma: utilisation} relates  this quantity to the number of times the single-agent bandit algorithm plays each action.
\begin{lemma}\label{lemma: utilisation}
    Running QuACK with an arbitrary $v\in V$ as the leader guarantees that:
    $$
        \sum_{w = 1}^{m} T_{aw}\left(t - d_{vw}\right) \leq T_{a}'\left(s_{t}\right) +  2 \sum_{w = 1}^{m} d_{vw}.
    $$
    for all $t \in \mathbb{N}$.
\end{lemma}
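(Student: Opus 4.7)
The plan is to prove the inequality in two steps. First, I would establish the exact accounting identity
$$
\sum_{w=1}^m T_{aw}(t - d_{vw}) \;=\; T_a'(s_t) + |Q_a^t|,
$$
where $|Q_a^t|$ is the size of the leader's queue $Q_a$ at the end of round $t$. The left side counts plays of $a$ from two complementary angles: the $w=v$ term is the leader's real plays $T_{av}(t)$ (since $d_{vv}=0$), while the $w\neq v$ terms count each follower play of $a$ whose reward message has reached the leader by round $t$ (as specified by the message-passing protocol in Section~\ref{section: message-passing}). On the right, $T_a'(s_t)$ tallies every $a$-selection $\pi$ has made: each such selection is either the leader's real play of $a$ or the pop of a queued follower message from $Q_a$. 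Follower messages received but not yet popped are precisely $|Q_a^t|$, which completes the identity.

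Second, I would show $|Q_a^t| \leq 2\sum_{w=1}^m d_{vw}$ by anchoring to the last round in which the leader actually played $a$ in the real environment. Define $t_* = \max\bigl(\{0\}\cup\{t' \leq t : A_{t'}^v = a\}\bigr)$. Because Algorithm~\ref{algorithm: our-algorithm} only plays $a$ in the real environment after the while loop has emptied $Q_a$, we have $|Q_a^{t_*}|=0$ (vacuously so if $t_*=0$). Writing cumulative additions to and pops from $Q_a$ as $N_a^t$ and $P_a^t$, we have $|Q_a^t| = (N_a^t - N_a^{t_*}) - (P_a^t - P_a^{t_*})$; dropping the non-negative pop difference, it suffices to bound the number of messages for $a$ added to $Q_a$ during $(t_*, t]$.

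Each such message from follower $w$ corresponds to a play $A_s^w = a$ at some $s \in (t_* - d_{vw},\, t - d_{vw}]$. There are two disjoint cases. If $s \leq d_{vw}$ the play is from the initial random phase, and the number of candidate $s$ in the window is at most $\min\{d_{vw},\, \max(0,\, 2d_{vw} - t_*)\}$. If $s > d_{vw}$ the play is dictated: $A_s^w = A_{s-d_{vw}}^v$ must equal $a$, and since all leader $a$-plays lie in $\{1,\ldots,t_*\}$, the values $s-d_{vw}$ are confined to the window $(t_* - 2d_{vw},\, t_*]$, giving at most $\min\{2d_{vw},\, t_*\}$ dictated contributions from $w$. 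A short case split on the size of $t_*$ versus $d_{vw}$ shows these two bounds always sum to at most $2d_{vw}$ per follower, and summing over $w$ yields $|Q_a^t|\leq 2\sum_{w=1}^m d_{vw}$. Combined with the identity, this proves the lemma.

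The main obstacle is the transient regime where $t_*$ is comparable to or smaller than the distances $d_{vw}$, so that both the residual random-phase messages and the short prefix of dictated messages contribute simultaneously. In the steady-state regime $t_* \geq 2d_{vw}$ only the dictated term is alive and the bound $2d_{vw}$ is tight; otherwise one must check directly that the sum $\min\{d_{vw}, 2d_{vw}-t_*\} + \min\{2d_{vw}, t_*\}$ never exceeds $2d_{vw}$, which boils down to the elementary observations that this sum equals $d_{vw}+t_*$ when $t_*\leq d_{vw}$ and equals $2d_{vw}$ when $d_{vw} < t_* \leq 2d_{vw}$. Everything else in the argument is routine bookkeeping around the queue dynamics.
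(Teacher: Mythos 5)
Your proof is correct, and it rests on the same two pillars as the paper's argument --- anchoring at the last round (your $t_*$, the paper's $\tau$) in which the leader actually plays $a$ in the real environment, and exploiting the fact that $Q_a$ is empty at that moment --- but the bookkeeping is organised differently. The paper invokes the conservation identity only at time $\tau$, where it reads $T_a'(s_\tau)=\sum_{w} T_{aw}(\tau-d_{vw})$ precisely because the queue is empty there, and then bounds the increment $T_{aw}(t-d_{vw})-T_{aw}(\tau-d_{vw})\le 2d_{vw}$ in one line by noting that follower $w$'s last play of $a$ occurs no later than round $\tau+d_{vw}$, so all residual plays lie in a window of $2d_{vw}$ rounds. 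You instead state the exact identity at the current time $t$, namely $\sum_{w} T_{aw}(t-d_{vw})=T_a'(s_t)+\abs{Q_a^t}$, and reduce the lemma to the queue-backlog bound $\abs{Q_a^t}\le 2\sum_{w} d_{vw}$, proved by counting arrivals since $t_*$ with a case split on random versus dictated plays and on the size of $t_*$ relative to $d_{vw}$; I checked the arithmetic of that case split and it is sound. Your formulation makes the slack term transparently interpretable as the leader's unprocessed backlog, and it explicitly covers the edge case $t_*=0$ (the leader never plays $a$), which the paper glosses over; the price is a more laborious counting argument than the paper's single window observation. Both routes are valid and yield the same constant.
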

\begin{proof}
    See Appendix \ref{appendix: proof-utilisation}
\end{proof}

Lemmas \ref{lemma: simulation} and \ref{lemma: utilisation} allow us to upper bound the group regret by the regret of the single-player bandit algorithm plus an additive graph-dependent quantity.
\begin{theorem}\label{theorem: group-regret}
    Under Assumptions \ref{assumption: bandit-environment} and \ref{assumption: graph-communication}, the group regret of QuACK run with single-player bandit algorithm $\pi$ and leading agent $v \in V$ is bounded by
    \begin{align*}
        R_{G}\left(n\right) \leq S_{\pi}\left(mn\right) + \left(\,3 \cdot \sum_{w = 1}^{m}d_{vw}\right)\sum_{a = 1}^{k} \Delta_{a}.
    \end{align*}
\end{theorem}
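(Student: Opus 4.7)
The plan is to start from the definition of group regret and funnel every quantity into the two lemmas already proved. Writing $R_{G}(n) = \sum_{a} \Delta_{a}\, \mathbb{E}[\sum_{w=1}^{m} T_{aw}(n)]$, I want to turn the inner sum into something Lemma~\ref{lemma: utilisation} can handle, then use Lemma~\ref{lemma: simulation} to identify the leading term as the single-player regret $S_{\pi}(mn)$.

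First I would bridge the gap between $T_{aw}(n)$, which counts real plays up to round $n$, and $T_{aw}(n - d_{vw})$, which is what Lemma~\ref{lemma: utilisation} bounds. Since each follower plays exactly one action per round, $T_{aw}(n) - T_{aw}(n-d_{vw}) \leq d_{vw}$ deterministically, and summing over $w$ gives $\sum_{w} T_{aw}(n) \leq \sum_{w} T_{aw}(n-d_{vw}) + \sum_{w} d_{vw}$. Then I apply Lemma~\ref{lemma: utilisation} with $t = n$ to replace the delayed count with $T_a'(s_n) + 2\sum_{w} d_{vw}$, yielding the clean inequality $\sum_{w} T_{aw}(n) \leq T_a'(s_n) + 3\sum_{w} d_{vw}$.

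Next I would take expectation, multiply by $\Delta_{a}$, and sum over $a \in \mathcal{A}$. Because $\Delta_{\star} = 0$, the contribution of the optimal action to $\sum_{a} \Delta_{a}\, \mathbb{E}[T_a'(s_n)]$ vanishes, so the first term collapses to $\sum_{a \neq \star} \Delta_{a}\, \mathbb{E}[T_a'(s_n)]$, while the additive graph term becomes $(3 \sum_{w} d_{vw}) \sum_{a} \Delta_{a}$. Since $s_n \leq mn$ almost surely (as noted after equation~\eqref{equation: pseudo-plays}) and $T_a'(\cdot)$ is monotone nondecreasing, I can replace $T_a'(s_n)$ by $T_a'(mn)$ without loss; Lemma~\ref{lemma: simulation} applied with argument $mn$ then identifies $\sum_{a \neq \star} \Delta_{a}\, \mathbb{E}[T_a'(mn)] = S_{\pi}(mn)$, completing the proof.

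The only step that requires any care is the transition between delayed and undelayed play counts: Lemma~\ref{lemma: utilisation} speaks about $T_{aw}(t - d_{vw})$, but the regret is about $T_{aw}(n)$, so one must pay an extra $\sum_{w} d_{vw}$ (merged into the stated factor of $3$) to reconcile them. The other subtle point is that $s_n$ is random; here the uniform bound $s_n \leq mn$, together with monotonicity of $T_a'$, makes the substitution into Lemma~\ref{lemma: simulation} immediate without needing Wald-type arguments. Everything else is bookkeeping.
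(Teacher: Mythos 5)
Your proposal is correct and follows essentially the same route as the paper's own proof: the same decomposition of $\sum_{w} T_{aw}(n)$ into delayed counts plus an additive $\sum_{w} d_{vw}$, the same application of Lemma~\ref{lemma: utilisation} at $t = n$, and the same use of $s_{n} \leq mn$ with monotonicity of $T_{a}'$ before invoking Lemma~\ref{lemma: simulation}. The only cosmetic difference is that the paper first isolates the leader's term $T_{av}(n)$ and uses $d_{vv}=0$, whereas you fold it into the sum over all $w$ directly.
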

\begin{proof}
Firstly, we upper bound the group plays for action $a \in \mathcal{A}$ at the end of the final round as:
\begin{align}
    \sum_{w = 1}^{m} T_{aw}\left(n\right)
    &\leq \sum_{w = 1}^{m} T_{aw}\left(n - d_{vw}\right) + \sum_{w \neq v} d_{vw}. \label{equation: leader-follow-decomp-main}
\end{align}
Combining Equation \eqref{equation: leader-follow-decomp-main} with Lemma \ref{lemma: utilisation} and $s_{n} < mn$ allows us to upper bound the number of times the entire network plays action $a$ by the number of times $\pi$ has played this action:
\begin{align}
    \sum_{w = 1}^{m} T_{aw}\left(n\right) 
    &\leq \sum_{w = 1}^{m} T_{aw}\left(n - d_{vw}\right) + \sum_{w \neq v} d_{vw}\notag\\
    &\leq T_{a}'\left(mn\right) + 3 \sum_{w \neq v} d_{vw} \label{equation: group-to-pseudo-for-regret}
\end{align}
Plugging Equation \eqref{equation: group-to-pseudo-for-regret} into the definition of the group regret gives us:
\begin{align*}
    R_{G}\left(n\right)
    &= \sum_{a \neq \star} \Delta_{a}\,\mathbb{E}\left[\,\sum_{w = 1}^{m} T_{aw}\left(n\right)\right]\\
    &\leq \sum_{a \neq \star} \Delta_{a}\,\mathbb{E}\left[T_{a}'\left(m n\right)\right] + \left[3 \sum_{w \neq v} d_{vw}\right] \sum_{a \neq \star} \Delta_{a}\\
    &= S_{\pi}\left(mn\right) + \left[3 \sum_{w \neq v} d_{vw}\right] \sum_{a \neq \star} \Delta_{a}
\end{align*}
where the final line follows from Lemma \ref{lemma: simulation} and the definition of regret in the single-player setting. See Appendix \ref{appendix: proof-group-regret} for further details.
\end{proof}

Theorem \ref{theorem: group-regret} suggests that we should pick: 
$$
    v_{\star} = \argmin_{v \in V} \sum_{w = 1}^{m} d_{vw},
$$
as the leading agent. This is intuitive, as minimising the sum of the shortest paths will minimise the total delay in sending and receiving instructions and feedback from the followers. Appendix \ref{appendix: additional-details} presents an approach for finding this agent in a distributed manner. 
Nevertheless, Assumption \ref{assumption: graph-communication} guarantees that the worst-case graph-dependence, regardless of the chosen leading agent, is given by: 
$$\sum_{w = 1}^{m} d_{vw} \leq d \left(m - 1\right).$$ 

It is important to highlight that Theorem \ref{theorem: group-regret} holds for any choice of bandit algorithm and makes weak assumptions on the rewards. This is in contrast to existing works, where it is common to analyse a specific bandit algorithm and make stronger assumptions on the reward distributions, such as subgaussian tails. In Section \ref{section: instances-quack} we show that the guarantees for QuACK are competitive with the case-specific analyses that exist in the literature, despite being strictly more general.

\section{Instances of QuACK}\label{section: instances-quack}
Theorem \ref{theorem: group-regret} holds under the assumption that the rewards are conditionally independent given the actions and the expected reward for each of the actions is finite. Therefore, we can apply our reduction in numerous bandit environments. Here, we present a non-exhaustive selection of bandit environments that satisfy Assumption \ref{assumption: bandit-environment}. For each, we describe how to translate single-agent algorithms to the multi-agent setting, provide bounds on their group regret, and compare to any existing multi-agent works in that setting. 
In each setting, we choose one specific single-player algorithm to use with QuACK.
However, any other provably efficient single-player algorithm could be combined with our reduction to get similar results.

\subsection{Subgaussian Bandits}
Traditionally, the multi-armed bandit problem consists of playing an action and receiving a reward drawn independently from a $\sigma$-subgaussian distribution that depends only on the chosen action \citep{Lattimore2020}. 
Thus, Assumption \ref{assumption: bandit-environment} holds. Corollary \ref{corollary: quack-ucb} presents the group regret bound for our algorithm when the leader uses the classic upper confidence bound algorithm \citep{Auer2002}.
\begin{corollary}\label{corollary: quack-ucb}
    Running QuACK with an arbitrary leader following the upper confidence bound algorithm with confidence parameter $\delta = 1/(m n)^{2}$ guarantees:
    $$
    R_{G}\left(n\right) \leq \sum_{a \neq \star}\frac{16 \sigma^{2}\ln\left(mn\right)}{\Delta_{a}} + \left(3 + 3\sum_{w = 1}^{m} d_{vw}\right) \sum_{a \neq \star} \Delta_{a}.
    $$
\end{corollary}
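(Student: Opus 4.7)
The plan is to derive the corollary as a direct application of Theorem \ref{theorem: group-regret}, where the only real work is to plug in the single-agent regret bound for UCB with the stated confidence parameter and verify the constants match. First I would confirm that a $\sigma$-subgaussian bandit environment satisfies Assumption \ref{assumption: bandit-environment}: the rewards depend only on the chosen action by definition, and since $\sigma$-subgaussian distributions have mean zero after centring (equivalently, bounded first moment), the condition $\mu_a < \infty$ holds. Hence Theorem \ref{theorem: group-regret} applies and gives
\begin{equation*}
    R_G(n) \leq S_{\mathrm{UCB}}(mn) + \Bigl(3\sum_{w=1}^{m} d_{vw}\Bigr) \sum_{a \neq \star} \Delta_a.
\end{equation*}

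Next I would invoke the standard single-agent regret bound for UCB \citep{Auer2002, Lattimore2020}. For $\sigma$-subgaussian rewards, UCB with confidence parameter $\delta$ over $T$ rounds admits a regret bound of the form
\begin{equation*}
    S_{\mathrm{UCB}}(T) \leq \sum_{a \neq \star} \frac{8\sigma^{2}\ln(1/\delta)}{\Delta_a} + 3\sum_{a \neq \star} \Delta_a\;.
\end{equation*}
Setting $T = mn$ and $\delta = 1/(mn)^{2}$ yields $\ln(1/\delta) = 2\ln(mn)$, which gives
\begin{equation*}
    S_{\mathrm{UCB}}(mn) \leq \sum_{a \neq \star} \frac{16 \sigma^{2} \ln(mn)}{\Delta_a} + 3 \sum_{a \neq \star} \Delta_a.
\end{equation*}
Substituting into the bound from Theorem \ref{theorem: group-regret} and collecting the two multiplicative factors of the gap sum into $\bigl(3 + 3\sum_{w=1}^{m} d_{vw}\bigr) \sum_{a \neq \star} \Delta_a$ gives the stated inequality.

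There is essentially no mathematical obstacle here; the corollary is a clean plug-and-chug application of Theorem \ref{theorem: group-regret}. The only care required is bookkeeping the constants: the factor of $2$ coming from $\ln(1/\delta) = 2\ln(mn)$ must combine with the standard $8\sigma^{2}$ coefficient to produce exactly $16\sigma^{2}$, and the additive $O(\Delta_a)$ term from the single-agent UCB analysis must align with the $3\sum_a \Delta_a$ absorbed into the graph-dependent constant. If one prefers a self-contained derivation rather than quoting the UCB bound, the proof would expand to the standard concentration argument: union-bounding the event that the UCB index of the optimal arm underestimates $\mu_\star$ (controlled by the choice $\delta = 1/(mn)^2$ via a Hoeffding-type tail bound) and then bounding the number of pulls of each suboptimal arm by $16\sigma^2\ln(mn)/\Delta_a^2 + O(1)$, but nothing new is needed beyond Theorem \ref{theorem: group-regret} itself.
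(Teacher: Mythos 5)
Your proposal is correct and follows essentially the same route as the paper: apply Theorem \ref{theorem: group-regret} with $T = mn$ and plug in the standard single-agent UCB regret bound (the paper quotes Theorem 7.1 of \citet{Lattimore2020} directly, which already has the $16\ln(T)/\Delta_a$ form for the stated $\delta$, while you re-derive it from $8\sigma^2\ln(1/\delta)/\Delta_a$ with $\delta = 1/(mn)^2$ --- the same bookkeeping). Nothing is missing.
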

\begin{proof}
    Theorem 7.1 of \citet{Lattimore2020} provides an upper bound on the regret of the upper confidence bound algorithm over a $T$-round interaction with the bandit environment:
    \begin{equation}\label{equation: ucb-upper-bound}
        S_{\pi}\left(T\right) \leq \sum_{a \neq \star}\frac{16 \ln\left(T\right)}{\Delta_{a}} + 3\sum_{a \neq \star} \Delta_{a}.
    \end{equation}
    Combining this upper bound with Theorem \ref{theorem: group-regret} and taking $T=mn$ gives the result.
\end{proof}
\citet{Rubio2019} analyse a multi-agent extension of the upper confidence bound algorithm and offer the best theoretical guarantees in the subgaussian setting. They obtain the following bound on the group regret:
$$
    \sum_{a \neq \star} \frac{16 \eta\left(1 + \frac{\epsilon}{2}\right) \sigma^{2} \ln\left(mn\right)}{\Delta_{a}} +\left(C_{G} + m + 4\right)\sum_{a \neq \star} \Delta_{a}
$$
Here, $\eta > 1$ and $\epsilon \in (0, 1)$ are hyperparameters and $C_G$ is a graph-dependent quantity they define as:
$$
    C_{G} = \left\lceil \frac{6m \ln\left(\frac{2m}{\epsilon}\right)}{\sqrt{2\ln\abs{\lambda}^{-1}}} \right\rceil
$$
where $\lambda$ is the second largest singular value of the communication matrix they use for the gossiping procedure. Corollary \ref{corollary: quack-ucb} shows a strict improvement in the leading-order term. Comparing the graph-dependence is a little more challenging and requires specifying the communication matrix. Following their recommendations, we can show that for regular graphs, our graph-dependence is smaller than a function of theirs: 
$$
    3 \sum_{w = 1}^{m} d_{vw} < \frac{C_{G}}{\sqrt{2\ln \abs{\lambda}^{-1}}}
$$
which suggests the dependence in our bounds is better in regular graphs with a small spectral gap. Additionally, we can prove a strict improvement on the graph-dependence for specific graphs, such as the star network. 
The amount of per-round communication of the two methods is similar.
\citet{Rubio2019} have each agent send two vectors of length $k$ to each neighbour. Conversely, QuACK sends an action-reward pair to a subset of the neighbours. See Appendix \ref{appendix: comparison} for further details. 

\subsection{Heavy-Tailed Bandits}
\citet{Bubeck2013} introduced the single-agent heavy tailed bandit problem. Here, the the subgaussian assumption on the reward distribution for each action is relaxed, and instead it is assumed that each reward distribution has finite moments of order $1 + \epsilon$ where $\epsilon > 0$ controls the heaviness of the tails. \citet{Bubeck2013} propose a generic solution for the single-agent setting that they call the \emph{robust upper confidence bound algorithm} that replaces the empirical mean with a robust estimator, such as the truncated-mean, median-of-means, or Catoni's M \citep{Bubeck2013}.\\

The only difference between this setting and the subguassian setting is the weaker assumption on the tails of the rewards.
Therefore, Assumption \ref{assumption: bandit-environment} holds and the QuACK algorithm can be used for the multi-agent heavy-tailed bandit problem with the following regret guarantees.
\begin{corollary} \label{corollary: quack-heavy}
    Running QuACK with an arbitrary leader and the truncated-mean robust upper confidence bound algorithm guarantees: 
    $$
        R_{G}\left(n\right) \leq \mathcal{O}\left(\sum_{a \neq \star} \frac{\sigma^{\frac{1}{\epsilon}} \ln\left(mn\right) }{\Delta_{a}^{\frac{1}{\epsilon}}} + \left(\sum_{w = 1}^{m} d_{vw}\right)\sum_{a \neq \star}\Delta_{a}\right)
    $$
    where $\mathbb{E}_{X \sim P_{a}} \abs{X}^{1 + \epsilon} \leq \sigma$ for all $a \in\mathcal{A}$.
\end{corollary}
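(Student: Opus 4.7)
The plan is to apply Theorem \ref{theorem: group-regret} as a black box, with $\pi$ taken to be the truncated-mean robust upper confidence bound algorithm of \citet{Bubeck2013}. The only thing to verify is that the pre-conditions of the theorem are met and that the known single-player regret bound for this algorithm plugs in cleanly.

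First I would check Assumption \ref{assumption: bandit-environment}. The heavy-tailed bandit model assumes rewards conditional on the chosen action are i.i.d.\ from $P_a$, and the finite $(1+\epsilon)$-moment condition $\mathbb{E}_{X \sim P_a}|X|^{1+\epsilon} \leq \sigma$ implies $|\mu_a| < \infty$, so the assumption is satisfied. Assumption \ref{assumption: graph-communication} is a standing assumption on the network and is independent of the reward model, so it carries over for free.

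Next I would cite the single-player regret bound for truncated-mean robust UCB. \citet{Bubeck2013} show that for any $T$,
\begin{equation*}
    S_\pi(T) \leq C \sum_{a \neq \star} \left(\frac{\sigma}{\Delta_a}\right)^{1/\epsilon} \ln(T) + C' \sum_{a \neq \star}\Delta_a,
\end{equation*}
for absolute constants $C, C' > 0$ (up to the usual polynomial dependence on $\epsilon$ that is absorbed into the $\mathcal{O}(\cdot)$). Applying Theorem \ref{theorem: group-regret} with horizon $T = mn$ yields
\begin{equation*}
    R_G(n) \leq S_\pi(mn) + 3\left(\sum_{w=1}^m d_{vw}\right)\sum_{a \neq \star}\Delta_a,
\end{equation*}
and substituting the displayed bound gives exactly the stated $\mathcal{O}$-expression, since the $C'\sum_{a\neq \star}\Delta_a$ term from the single-player bound is dominated by the graph-dependent additive term (which is at least $\sum_{a\neq \star}\Delta_a$ whenever $\sum_w d_{vw} \geq 1$, i.e.\ whenever $m \geq 2$).

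There is essentially no obstacle here: the work was done in proving Theorem \ref{theorem: group-regret} and in \citet{Bubeck2013}'s analysis. The only mild subtlety is to note explicitly that Lemma \ref{lemma: simulation} does not require subgaussianity---it only needs the rewards to be i.i.d.\ given the action, which is exactly the content of Assumption \ref{assumption: bandit-environment}. Hence exchangeability of the reward samples queued by the leader still holds in the heavy-tailed setting, and the simulation argument underlying Theorem \ref{theorem: group-regret} goes through unchanged.
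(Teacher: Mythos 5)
Your proposal is correct and follows exactly the route the paper intends: verify Assumption \ref{assumption: bandit-environment} (which only needs conditionally i.i.d.\ rewards with finite mean, implied by the finite $(1+\epsilon)$-moment condition), invoke the single-player regret bound for truncated-mean robust UCB from \citet{Bubeck2013}, and substitute it into Theorem \ref{theorem: group-regret} with horizon $mn$ --- mirroring the paper's explicit proof of Corollary \ref{corollary: quack-ucb}. Your added remark that Lemma \ref{lemma: simulation} does not require subgaussianity is a helpful clarification but not a departure from the paper's argument.
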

\citet{Dubey2020} study the multi-agent heavy-tailed bandit problem and devise two algorithms based on the truncated-mean estimator. Their best algorithm achieves a group regret bound with a similar form to Corollary~\ref{corollary: quack-heavy}, except the leading order term involving $n$ is scaled by a graph-dependent quantity. This graph-dependent term is the independence number of the $\gamma$-th power of the graph, where $\gamma$ is a hyperparameter of their algorithm. Thus, our bound is strictly better when this quantity is greater than one, which occurs when $\gamma < d$. Since they also employ a leader-based algorithm that uses a message passing procedure, the additive terms in their bound and their per-round communication are comparable. See Appendix \ref{appendix: comparison} for further details. 

\subsection{Duelling Bandits}
\citet{Yue2009a} introduced the single-agent duelling bandit problem. Here, feedback from the environment is the outcome of a duel between two actions. Modifying Algorithm \ref{algorithm: our-algorithm} for the multi-agent duelling bandit problem amounts to maintaining a queue for each pair of actions: 
$\mathcal{A}_{+} = \mathcal{A}\times \mathcal{A}$. 
Now, each agent will communicate the pair of actions and the outcome of the duel to their neighbours.\\

Assumption \ref{assumption: bandit-environment} holds since the outcomes of the duels are assumed to depend only on the pair of actions chosen in each round. Recall that we define the group regret relative to a fixed optimal action. Therefore, Theorem \ref{theorem: group-regret} applies to solution concepts, such as the Condorcet, Copeland, and Borda winners \citep{Zoghi2014, Zoghi2015, Jamieson2015}. For illustrative purposes, we present an upper bound on the group regret when the leader uses the relative upper confidence bound algorithm, which is designed under the assumption that the Condorcet winner exists \citep{Zoghi2014}.

\begin{corollary}\label{corollary: quack-duelling}
    Running QuACK with an arbitrary leader following the relative upper confidence bound algorithm of \citet{Zoghi2014} guarantees:
    $$
        R_{G}\left(n\right)
        \leq \mathcal{O}\left(\sum_{a \neq \star} \frac{\ln\left(mn\right)}{\tilde \Delta_{a}} + \left(\sum_{w = 1}^{m} d_{vw}\right)\sum_{a \neq \star}\tilde \Delta_{a}\right)
    $$
    where $\tilde \Delta_{a} = P\left(a_{\star} \succ a\right) - \nicefrac{1}{2}$ denotes the sub-optimality gap of action $a$ in the the duelling bandit setting.
\end{corollary}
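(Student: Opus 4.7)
The plan is to mimic the proofs of Corollaries \ref{corollary: quack-ucb} and \ref{corollary: quack-heavy}: first recast the multi-agent duelling bandit problem in the framework of Section \ref{section: problem-setting} so that Theorem \ref{theorem: group-regret} applies, then invoke a known single-agent regret guarantee for the relative upper confidence bound (RUCB) algorithm, and finally set $T = mn$. For the recasting, I would follow the setup described just before the corollary: take the action set to be $\mathcal{A}_{+} = \mathcal{A}\times\mathcal{A}$, so that in each round every agent selects a pair and observes the binary outcome of the duel. By the standing duelling bandit assumption, this outcome depends only on the pair, so Assumption \ref{assumption: bandit-environment} is satisfied with $P_{(a,b)}$ the distribution of the duel between $a$ and $b$. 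The leader maintains one queue $Q_{(a,b)}$ per pair, and messages from followers carry the pair together with the duel outcome. Nothing in the proofs of Lemmas \ref{lemma: simulation} and \ref{lemma: utilisation}, or in Theorem \ref{theorem: group-regret}, cares whether the ``action'' label is a scalar or a pair; those arguments use only the conditional i.i.d. structure given the chosen label and the queue bookkeeping. Hence Theorem \ref{theorem: group-regret} transfers verbatim to the re-indexed problem.

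Next I would import the single-agent RUCB regret guarantee. \citet{Zoghi2014} show that, whenever the Condorcet winner $a_{\star}$ exists, the $T$-round regret of RUCB satisfies
$$
S_{\pi}(T) = \mathcal{O}\!\left(\sum_{a\neq\star}\frac{\ln T}{\tilde \Delta_{a}}\right),
$$
with $\tilde \Delta_{a} = P(a_{\star}\succ a) - \tfrac{1}{2}$. Substituting $T = mn$ and this bound into Theorem \ref{theorem: group-regret} immediately gives the leading-order logarithmic term of the corollary and a graph-dependent additive term of the form $\bigl(3\sum_{w}d_{vw}\bigr) \sum_{(a,b)} \Delta_{(a,b)}$, where $\Delta_{(a,b)}$ denotes the sub-optimality gap of the pair-action $(a,b)$.

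The only non-bookkeeping step is reconciling the per-arm indexing natural to duelling regret with the per-pair indexing produced by Theorem \ref{theorem: group-regret}. I would handle this with the standard identification $\Delta_{(a,b)} = (\tilde \Delta_{a} + \tilde \Delta_{b})/2$, under which the pair-decomposition of the single-agent regret equals the usual Condorcet regret. Applying the same identification to the additive term and summing gives $\sum_{(a,b)} \Delta_{(a,b)} = k \sum_{a\neq\star}\tilde \Delta_{a}$, so the graph-dependent term becomes $\mathcal{O}\bigl((\sum_{w}d_{vw})\sum_{a\neq\star}\tilde\Delta_{a}\bigr)$ once the factor $k$ is absorbed into the $\mathcal{O}(\cdot)$. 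This matches the stated bound exactly. The hard part, if any, is this accounting check; once it is in place, the rest is the single-agent-bound-plus-Theorem-\ref{theorem: group-regret} recipe already used for Corollaries \ref{corollary: quack-ucb} and \ref{corollary: quack-heavy}.
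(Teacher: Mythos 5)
Your proposal is correct and takes essentially the same route as the paper, which gives no separate proof for this corollary: re-index the actions as pairs in $\mathcal{A}_{+}=\mathcal{A}\times\mathcal{A}$, note that duel outcomes depend only on the chosen pair so Assumption~\ref{assumption: bandit-environment} holds, apply Theorem~\ref{theorem: group-regret} with $T=mn$, and plug in the single-agent RUCB guarantee of \citet{Zoghi2014}. Your explicit accounting via $\Delta_{(a,b)}=(\tilde\Delta_a+\tilde\Delta_b)/2$, yielding $\sum_{(a,b)}\Delta_{(a,b)}=k\sum_{a\neq\star}\tilde\Delta_a$, is a detail the paper leaves implicit inside the $\mathcal{O}(\cdot)$, and is handled correctly.
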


\citet{Raveh2024} develop several algorithms for the multi-agent duelling bandit problem. Their best algorithm builds on the relative upper confidence bound algorithm and achieves a similar guarantee. However, the leading order term involving $n$ is scaled by the clique covering number of the $\gamma$-th power of the graph, where $\gamma$ is a hyperparameter of their algorithm. Thus, we obtain a strict improvement whenever this quantity is greater than one, which occurs when $\gamma < d$. Since their algorithm employs a message passing procedure, the additive terms in their bound and the per-round communication are comparable to ours. See Appendix \ref{appendix: comparison} for further details. 

\subsection{Local Differential Privacy}
Privacy is a topic of growing interest in the machine learning community and seeks to protect private information within datasets. \citet{Ren2020} study \emph{local differential privacy} (LDP) in the bandit setting, where the environment uses a mechanism to make the rewards private before sending them to the agent. Formally, in the multi-agent setting, the only difference is that the rewards each agent receives are privatised: 
$$
    X_{t}^{v} = f_{\epsilon}\left(X \right) \,\text{ for }\, X \stackrel{i.i.d.}{\sim} P_{A_{t}^{v}}
$$
where $f_{\epsilon}$ denotes the privatising mechanism that ensures $\epsilon$-LDP of the rewards before handing them to the agent. Procedures for privatising rewards include adding Bernoulli or Laplace noise to the rewards. Proving that the mechanism provides the specified level of privacy is independent of the bandit algorithm, e.g. Lemmas 2 and 5 of \citet{Ren2020}. Therefore, we can directly use these mechanisms in the multi-agent setting and guarantee that the rewards are $\epsilon$-LDP. Currently, no multi-agent algorithms for this problem setting exist. However, Assumption \ref{assumption: bandit-environment} is satisfied in this setting because the curator adds $i.i.d.$ noise to the rewards. Therefore, we develop the first provably efficient algorithm for multi-agent bandits with local differential privacy by plugging an existing single agent algorithm into QuACK.
\begin{corollary}
    Running QuACK with an arbitrary leader following LDP-UCB-L of \citet{Ren2020} is $\epsilon$-LDP and guarantees:
    $$
        R_{G}\left(n\right)
        \leq \mathcal{O}\left(\sum_{a \neq \star}
        \frac{1}{\epsilon^{2}}\frac{\ln\left(mn\right)}{\Delta_{a}} + \left(\sum_{w = 1}^{m} d_{vw}\right)\sum_{a \neq \star} \Delta_{a}\right). 
    $$
\end{corollary}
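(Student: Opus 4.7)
The plan is to split the claim into two independent pieces: the privacy guarantee and the regret bound. Neither part requires revisiting the internals of QuACK; both follow by plugging existing results into the framework already developed.

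For the privacy guarantee, I would observe that the only reward signals exchanged over the network, queued by the leader, or consumed by $\pi$ are the privatised rewards $X_t^v = f_\epsilon(X)$. The mechanism $f_\epsilon$ is applied locally, at the environment, before any reward enters the algorithm. By the standard post-processing property of local differential privacy, any function of $\epsilon$-LDP outputs is itself $\epsilon$-LDP. Since everything QuACK does downstream (appending to queues, forwarding along the shortest-path tree, updating $\pi$, choosing actions) is deterministic or randomised in a way that is independent of the raw rewards $X \sim P_a$ conditional on the privatised rewards, the resulting protocol inherits the $\epsilon$-LDP guarantee from $f_\epsilon$ without any extra work.

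For the regret bound, the key observation is that Assumption~\ref{assumption: bandit-environment} is satisfied in the LDP setting. The curator draws $X \stackrel{i.i.d.}{\sim} P_{A_t^v}$ and applies the mechanism $f_\epsilon$ with independent internal randomness across rounds and agents. Hence the privatised rewards $X_t^v$ are i.i.d.\ conditional on the action $A_t^v$, with a common distribution $\tilde P_a$ that depends only on $a$ and has finite mean. This is exactly what is needed to apply Theorem~\ref{theorem: group-regret}, which yields
\begin{equation*}
    R_G(n) \le S_\pi(mn) + 3\Bigl(\sum_{w=1}^m d_{vw}\Bigr) \sum_{a \neq \star} \Delta_a .
\end{equation*}
Taking $\pi$ to be LDP-UCB-L and invoking the single-agent regret bound of \citet{Ren2020}, which is of order $\sum_{a \neq \star} \frac{\ln T}{\epsilon^2 \Delta_a}$ for a $T$-round interaction, and substituting $T = mn$ gives the stated corollary.

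The only subtlety, and the part worth being careful about, is checking that the privatisation does not break Assumption~\ref{assumption: bandit-environment}. In particular, I would make explicit that $f_\epsilon$ is drawn independently across rounds and agents, so that Lemma~\ref{lemma: simulation} applies unchanged to the induced distributions $\tilde P_a$: the sequence of privatised rewards the leader feeds to LDP-UCB-L through the queues is exchangeable and matches in distribution the sequence it would see in a single-agent LDP environment. Once this is spelled out, the rest is mechanical substitution.
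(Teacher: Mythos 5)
Your proposal is correct and follows essentially the same route as the paper, which likewise argues that the privacy guarantee is independent of the bandit algorithm (the mechanism is applied before the rewards enter QuACK), notes that Assumption~\ref{assumption: bandit-environment} holds because the privatised rewards are i.i.d.\ given the action, and then plugs the single-agent bound for LDP-UCB-L with $T = mn$ into Theorem~\ref{theorem: group-regret}. Your explicit remarks on post-processing and on the independence of the mechanism's randomness across rounds and agents are slightly more careful than the paper's informal discussion, but they formalise the same argument rather than departing from it.
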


\section{Experimental Results}\label{section: experimental-results}
We compare the performance of QuACK (Algorithm \ref{algorithm: our-algorithm}) initialised with UCB against coop-UCB \citep{Landgren2016} and DDUCB \citep{Rubio2019}. Additionally, we initialise QuACK with Thompson Sampling to compare against Dec-TS \citep{Lalitha2021}. \\

Our experiments consider a simple bandit environment with ten actions where each reward distribution is Bernoulli with $\mu_{1} = 0.5$ and $\mu_{a} = 0.45$ otherwise. Following existing works, we conduct our experiments on several standard graph structures, namely cycle and grid graphs. Additionally, we investigate the star graph to verify our theoretical insights. All results are averaged over $100$ independent runs and we represent uncertainty by shading between the $2.5$-th and $97.5$-th quantiles.\\ 

\begin{figure*}[h!]
    \centering
    \includegraphics[]{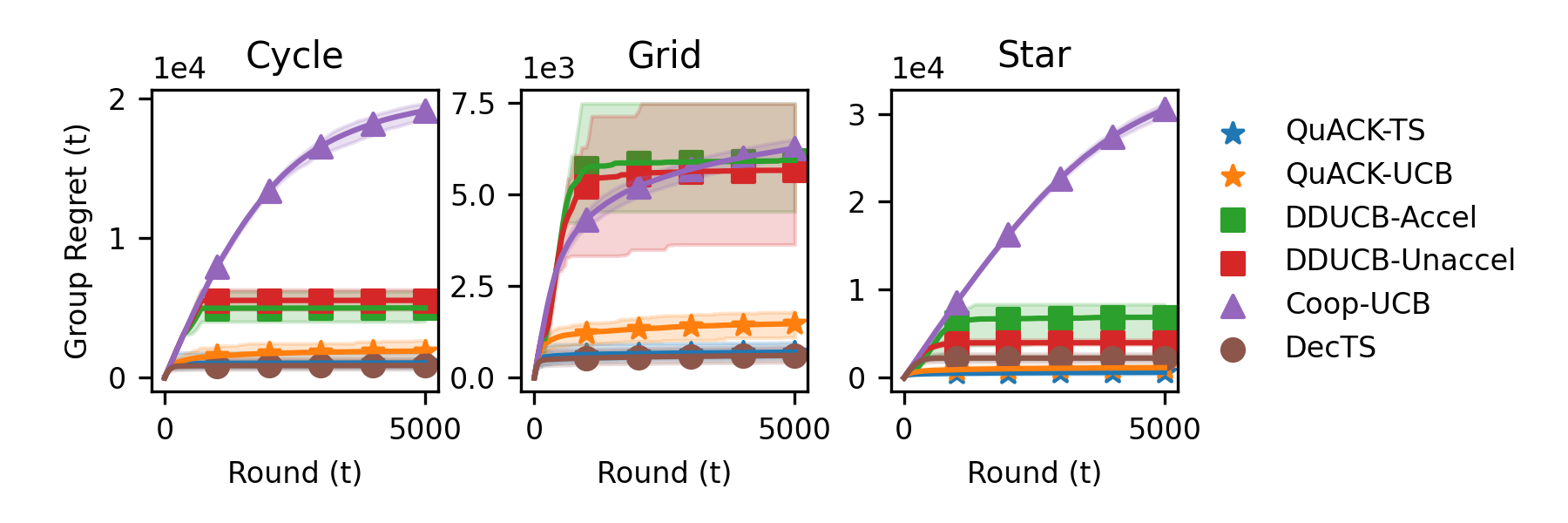}
    \caption{Group Regret for a Network of $196$ Agents.}
    \label{figure: group-regret-196}
\end{figure*}

Figure \ref{figure: group-regret-196} compares the group regret of each algorithm across various graph structures. For UCB, QuACK significantly outperforms existing methods. These results support the claims we made in Section \ref{section: instances-quack} based on our theoretical results. For Thompson Sampling, we observe that our algorithm is competitive with existing work on the cycle and grid structures, and performs significantly better on the star structure, also supporting our theoretical claims. Appendix \ref{appendix: additional-experiments} displays similar results for additional experimental settings and provides further details on the settings.

\section{Conclusion}
This paper proposes a generic black-box algorithm that can extend any single-agent bandit algorithm to the multi-agent setting. Under mild assumptions on the bandit environment, we show that our black-box approach immediately transfers the theoretical guarantees of the chosen bandit algorithm from the single agent setting to the multi-agent setting.\\

In the subgaussian setting, we can pair our algorithm with any nearly minimax optimal single agent algorithm and match the minimax lower bound in the multi-agent setting, up to an additive graph-dependent quantity. We suspect that the lower bound is loose because it fails to capture the structure of the graph and the communication delays. Proving graph-dependent lower bounds is an interesting open problem.\\ 

Additionally, we assumed that each agent can communicate real-valued messages to each neighbour at the end of every round with arbitrary precision. However, some practical applications impose constraints on the amount of communication in terms of frequency or the number of bits. Relaxing these assumptions are an important next step for this line of research.

\bibliographystyle{plainnat}
\bibliography{main}
\newpage
\begin{appendix}
\section{Implementation Details}\label{appendix: additional-details}
Section \ref{section: black-box-reduction} presents QuACK assuming that we have elected a leader and have computed a shortest path tree for our message passing protocol. Here, we describe how we can perform these steps in a distributed manner using well-known algorithms.

\subsection{Shortest Path Tree}\label{appendix: shortest-path-tree}
Constructing the shortest path tree rooted at a fixed vertex amounts to solving the single-source shortest path problem. Traditionally, we can solve this problem in non-distributed systems using the Bellman-Ford algorithm \citep{Ford1956, Bellman1958}. However, using this algorithm would require the existence of an agent who possesses knowledge of the entire network topology. Therefore, we will present the distributed variant of this algorithm \citep{Bertsekas1992}. Here, each agent knows only their neighbours.\\

Algorithm \ref{algorithm: distributed-bellman-ford} presents the pseudo-code for the Distributed Bellman-Ford algorithm. Briefly, this algorithm takes a source agent as input, and each agent keeps track of their distances from the source and their parent on the shortest path tree. 

\begin{algorithm}[h]\caption{Distributed Bellman-Ford ($\texttt{DBF}$)}
\label{algorithm: distributed-bellman-ford}
    \begin{algorithmic}
        \vspace{1mm}
        \STATE \textbf{Input.} Index of a source agent $v$ 
        \STATE Set $d_{v}^{1} = 0$ and $d_{w}^{1} = \infty$ for $w \neq v$
        \STATE Set $\mathcal{P}_{w} = \emptyset$ for each $w \in V$ to initialise their parent.
        \FOR{$t \in \{1, 2, \cdots m - 1\}$}
            \STATE Each $w \in V$ sends $d_{w}^{t}$ to $z \in N_{w}$
            \STATE Each $w \in V$ receives $d_{z}^{t}$ from $z \in N_{w}$
            \STATE Each $w \in V$ updates their distance from the source
            $$
                d_{w}^{t + 1} = \min_{z \in N_{w}\cup\{w\}} \left\{d_{z}^{t} + 1\{z \neq w\}\right\}
            $$
            \STATE Each $w \in V$ updates their parent:
            $$
                \mathcal{P}_{w} = 
                \begin{cases}
                    \argmin_{z \in N_{w}} d_{z}^{t} + 1 & \text{ if } d_{w}^{t + 1} < d_{w}^{t}\\
                    \mathcal{P}_{w} & \text{ if } d_{w}^{t + 1} = d_{w}^{t}
                \end{cases}
            $$
        \ENDFOR
        \STATE Each $w \in V$ sends their identifier $w$ to their parent $\mathcal{P}_{w}$.
        \STATE Each $w \in V$ receives the identifiers from their children and creates their set of children:
            $$
                \mathcal{C}_{w} = \{z \in V: \mathcal{P}_{z} = w\}
            $$
    \end{algorithmic}
\end{algorithm}

After Algorithm \ref{algorithm: distributed-bellman-ford} terminates, each agent will know their children and parent on the shortest path tree rooted at the index of the input agent. Indeed, these are the quantities we require to implement the message-passing scheme described in Section \ref{section: message-passing}.

\subsection{Leader Election}\label{appendix: leader-election}

Leader Election (LE) is a well-known problem in distributed computing \citep{Santoro2006, Fokkink2013}. Here, we want the network to unanimously agree upon a single agent as a leader. Generally, there are impossibility results for anonymous networks \citep{Fokkink2013}. Therefore, we will assume that each agent has a unique identifier. \citet{Wang2020} devise a simple leader election scheme that requires at most $d + 1$ iterations, where $d$ is the diameter of the graph. Algorithm \ref{algorithm: leader-election} presents a slight modification of their procedure that will ease the exposition of finding the best leader. 

\begin{algorithm}[h]\caption{Leader Election ($\texttt{LE}$)}\label{algorithm: leader-election}
    \begin{algorithmic}
        \vspace{1mm}
        \STATE Each $v \in V$ sets $f_{v}$ arbitrarily.
        \bindent
        \STATE Section \ref{appendix: optimising-leader} has each agent set $f_{v}$ to their sum of shortest paths.
        \eindent
        \STATE Each $v \in V$ saves their initial value $I_{v} = (v, f_{v})$
        \STATE Each $v \in V$ creates their state value $S_{v}^{1} = (v, f_{v})$
        \FOR{$t \in \{1, 2, \cdots, m\}$}
            \STATE Each $v \in V$ receives $S_{w}^{t}$ from $w \in N_{v}$
            \IF{$\exists w \in N_{v}$ such that $f_{w} < f_{v}$}
                \STATE Update $S_{v}^{t + 1} = (w, f_{w})$
            \ELSIF{$\exists w \in N_{v}$ such that $f_{w} = f_{v}$}
                \STATE Update $S_{v}^{t + 1} = (\min\{w, v\}, f_{v})$
            \ELSE
                \STATE $S_{v}^{t + 1} = S_{v}^{t}$
            \ENDIF            
        \ENDFOR
        \IF{$I_{v} = S_{v}^{m + 1}$}
            \STATE Agent $v$ is the leader.
        \ENDIF
    \end{algorithmic}
\end{algorithm}
Running Algorithm \ref{algorithm: leader-election} requires at most $m$ iterations because every agent will have seen the identifier of every other agent in the graph at the end of this iteration. Furthermore, Algorithm \ref{algorithm: leader-election} will always terminate with exactly one leader. To see this, we can consider two possible cases.
\begin{itemize}
    \item \textit{Unique Minimum}. Here, there exists exactly one agent $v$ who possesses the smallest $f_{v}$ value. Therefore, they never get to update their state and become the leader.
    \item \textit{Non-Unique Minimum}. Let $f = \min_{v \in V} f_{v}$ and suppose multiple agents attain this value. Algorithm \ref{algorithm: leader-election} implements an index-based tie-breaking rule to handle this case. Notably, agents with $f_{v} = f$ will eventually receive an $f_{w} = f$ and they will proceed to check if their index is minimal. Since each agent has a unique index, there will be exactly one agent who possesses $f_{v} = f$ with the minimum index. This agent will never update their state and will become the leader. 
\end{itemize}

\subsection{Optimising the Leader}\label{appendix: optimising-leader}
\begin{algorithm}[h]\caption{Best Leader}\label{algorithm: best-leader}
    \begin{algorithmic}
        \vspace{1mm}
        \FOR{$v \in V = \{1, 2, \cdots, m\}$}
            \STATE \textbf{Compute Sum of Shortest Paths}
            \STATE Execute $\texttt{DBF}(v)$
            \bindent
            \STATE Each agent now knows $d_{vw}$ (distance from source $v$)
            \STATE Each agent now knows $\mathcal{P}_{w}$ (their parent on the shortest path tree with source $v$)
            \eindent
            \STATE Each $w \in V$ creates message $M_{1}^{w} = (w, d_{vw})$ 
            \FOR{$t \in \{1, 2, \cdots, m\}$}
                \STATE Each $w \in V$ sends $M_{t}^{w}$ to $\mathcal{P}_{w}$
                \STATE Each $w \in V$ updates $M_{t + 1}^{w} = M_{t}^{w} \cup \{M_{t}^{z}\}_{z \in \mathcal{C}_{w}}$
            \ENDFOR
            \STATE Agent $v$ calculates $f_{v} = \sum_{w} d_{vw}$ from the messages.
        \ENDFOR

        \STATE \textbf{Use Sum of Shortest Paths as $f_{v}$ in Leader Election}
        \STATE Each agent now knows their value for $f_{v} = \sum_{w = 1}^{m} d_{vw}$
        \STATE Execute $\texttt{LE}$ to elect the leader with minimal sum of shortest paths.
    \end{algorithmic}
\end{algorithm}

Combining Sections \ref{appendix: leader-election} and \ref{appendix: shortest-path-tree} allows us to find the best-positioned leading agent in the network according to our theoretical guarantees. Algorithm \ref{algorithm: best-leader} presents the pseudo-code for finding and electing the leader.\\

Briefly, this procedure starts with each agent calling Algorithm \ref{algorithm: distributed-bellman-ford}. Afterwards, we perform several rounds of message passing so that agent $v$ can calculate the sum of shortest paths: $f_{v}$. Finally, Algorithm \ref{algorithm: leader-election} performs leader election with $f_{v}$ equal to the sum of shortest paths.

\section{Missing Proofs}\label{appendix: missing-proofs}
Throughout the main paper we deferred several proofs to the appendix and gave sketches of the proofs for the main results. Here, we present full proofs of the claims made in the main paper and we do so in chronological order. Specifically, the remainder of this section will have the following structure:
\begin{itemize}
    \item Section \ref{appendix: proof-simulation} presents a proof of Lemma \ref{lemma: simulation} which establishes the equivalence of playing in the bandit environment and the queued version of the environment created by the feedback from non-leading agents.
    \item Section \ref{appendix: proof-utilisation} presents a proof of Lemma \ref{lemma: utilisation} which tells us that the difference between the group plays and the pseudo plays of the leader is almost surely bounded by a graph-dependent constant.
    \item Section \ref{appendix: proof-group-regret} presents a full proof of Theorem \ref{theorem: group-regret} and fills in steps missing from the proof sketches given in the main paper.
\end{itemize}

\subsection{Proof of Lemma \ref{lemma: simulation}}\label{appendix: proof-simulation}
In QuACK, the leader uses rewards from the queues as well as their own observations to \emph{simulate} an environment for updating the bandit algorithm $\pi$. Recall that the rewards in the queues have been gathered by other agents in the network when interacting with the environment. Lemma \ref{lemma: simulation} relates the expected number of times the leader updates the algorithm $\pi$ for each action $a$ to the expected number of times the single-agent bandit algorithm $\pi$ would play action $a$ over an $n$-round interaction with the environment. More precisely it states that:
\begin{equation}\label{eq:lemma1appendix}
    \sum_{a \neq \star} \Delta_{a}\,\mathbb{E}_{P}\left[\,\sum_{t = 1}^{n} 1\{A_{t} = a\}\right] = \sum_{a \neq \star} \Delta_{a}\, 
        \mathbb{E}\left[T_{a}'\left(n\right)\right]\;.
\end{equation}
In Equation \eqref{eq:lemma1appendix}, the first expectation is with respect to the bandit algorithm $\pi$ interacting directly with the bandit environment. The second expectation is with respect to bandit algorithm $\pi$ interacting with the \textit{simulated} environment created by the leader. \\

For each action $a\in\mathcal{A}$, let $P_{a}$ be the distribution over possible rewards when playing action $a$ in the bandit environment, as defined in Assumption \ref{assumption: bandit-environment}. For each $a$, we also define $\tilde{P}_{a}$ which is a probability measure over the possible rewards for playing action $a$ in the simulated environment created by the leader. 
In addition let $p_{\pi}$ be the density over action-reward pairs up to time $n$ when the single player bandit  algorithm $\pi$ interacts with the environment, and let $\tilde{p}_{\pi}$ be the density over action-reward pairs in the simulated environment up to time $n$.
Following the notation of \citet{Lattimore2020}, we can write down these densities as follows:
\begin{align*}
    p_{\pi}\left(a_{1}, x_{1}, \cdots, a_{n}, x_{n}\right) &= \prod_{t = 1}^{n} \pi\left(a_{t}\,\vert\, a_{1}, x_{1}, \cdots, a_{t - 1}, x_{t - 1}\right) p_{a_{t}}\left(x_{t}\right)\\
    \tilde{p}_{\pi}\left(a_{1}, x_{1}, \cdots, a_{n}, x_{n}\right) &= \prod_{t = 1}^{n} \pi\left(a_{t}\,\vert\, a_{1}, x_{1}, \cdots, a_{t - 1}, x_{t - 1}\right) \tilde{p}_{a_{t}}\left(x_{t}\right)
\end{align*}
where $p_{a}$ and $\tilde{p}_{a}$ denote the Radon-Nikodym derivatives of $P_{a}$ and $\tilde{P}_{a}$, respectively. Using these notations we can define the two expectations in equation~\eqref{eq:lemma1appendix} more precisely, i.e. for any action $a\in\mathcal{A}$,
$$
\mathbb{E}_{P}\left[\,\sum_{t = 1}^{n} 1\{A_{t} = a\}\right] =\int_{\mathcal{H}_{n}} \sum_{t = 1}^{n}1\{a_{t} = a\}\;p_\pi(h)\,dh $$
whereas
$$
\mathbb{E}\left[T_{a}'\left(n\right)\right]=\int_{\mathcal{H}_{n}} \sum_{t = 1}^{n}1\{a_{t} = a\}\;\tilde{p}_\pi(h)\,dh
$$
where $h=(x_1,a_1,\dots x_n,a_n)\in \mathcal{H}_{n} = \left(\mathcal{A} \times \mathbb{R}\right)^{n}$ is a sequence of $n$ action-reward pairs. Under Assumption \ref{assumption: bandit-environment} we know that the rewards for each action-agent pair are independent and identically distributed random variables. Therefore, using exchangeability of independent and identically distributed random variables, we have that:
$$
    P_{a} \stackrel{d}{=} \tilde{P}_{a}
$$
The Radon-Nikodym Theorem tells us that the $p_{a}$ is unique, 
e.g. see Theorem 4.2.2 of \citet{Cohn2013}. Combining $P_{a} = \tilde{P}_{a}$ with this well-known result tells us that $p_{a}(x) = \tilde{p}_{a}(x)$  for all $x\in \mathbb{R}$. Therefore $p_{\pi}(h)=\tilde{p}_{\pi}(h)$ for all $h\in \mathcal{H}_{n} = \left(\mathcal{A} \times \mathbb{R}\right)^{n}$ and hence,
$$\int_{\mathcal{H}_{n}} f\left(h\right) p_{\pi}\left(h\right) \,dh = \int_{\mathcal{H}_{n}} f\left(h\right) \tilde{p}_{\pi}\left(h\right)\,dh
$$
for any arbitrary function $f: \mathcal{H}_{n} \rightarrow \mathbb{R}$. Choosing $f(h) = \sum_{t = 1}^{n}1\{a_{t} = a\}$ completes the proof.

\subsection{Proof of Lemma \ref{lemma: utilisation}}\label{appendix: proof-utilisation}
Define the last round that the leader vertex plays action $a$ in the bandit environment as follows: 
$$
    \tau = \max\left\{j \leq t: A_{j}^{v} = a\right\}.
$$
Recall $s_{t}$ denotes the pseudo round counter when the leading agent makes their $t$-th decision in the bandit environment. Using Equation \eqref{equation: pseudo-plays} and leveraging the fact that the round in the previous display is associated with a play of action $a$ gives us the following:
\begin{align}
    T_{av}'\left(s_{t}\right) 
    &= \sum_{s = 1}^{s_{t}} 1\left\{a_{s}^{v} = a\right\}\notag\\
    &= \sum_{s = 1}^{s_{\tau}} 1\left\{a_{s}^{v} = a\right\} + \sum_{s = s_{\tau} + 1}^{s_{t}} 1\left\{a_{s}^{v} = a\right\}\notag\\
    &= T_{av}'\left(s_{\tau}\right) + \sum_{s = s_{\tau} + 1}^{s_{t}} 1\left\{a_{s}^{v} = a\right\}\notag\\
    &\stackrel{(\star)}{=} \sum_{w = 1}^{m} T_{aw}\left(\tau - d_{vw}\right) + \sum_{s = s_{\tau} + 1}^{s_{t}} 1\left\{a_{s}^{v} = a\right\}\notag\\
    &\geq \sum_{w = 1}^{m} T_{aw}\left(\tau - d_{vw}\right) \label{equation: pseudo-to-group}
\end{align}
where $(\star)$ follows from the fact that the leader plays action $a$ in round $\tau$ so the number of times they play this action in the queue must be equivalent to the group plays of this action in the bandit environment. Now, Equation \eqref{equation: pseudo-to-group} relates the number of times the bandit algorithm has played action $a$ to the number of times the network has chosen this action. Recall that each follower will play action $a$ for the last time in round: 
$$
    \tau_{w} = \tau + d_{vw}
$$
where $\tau$ is the last round that the leader plays action $a$ in the bandit environment. Using Equation \eqref{equation: real-plays} with the definition and properties of $\tau_{w}$ allows us to get an upper bound on the number of plays for any non-leading agent $w$ as follows:
\begin{align}
    T_{aw}\left(t - d_{vw}\right)
    &= \sum_{j = 1}^{t - d_{vw}} 1\{A_{j}^{w} = a\}\notag\\
    &= \sum_{j = 1}^{\tau - d_{vw}} 1\{A_{j}^{w} = a\} + \sum_{j = \tau - d_{vw} + 1}^{t - d_{vw}} 1\{A_{j}^{w} = a\}\notag\\
    &\stackrel{(\star)}{\leq} T_{aw}\left(\tau - d_{vw}\right) + \sum_{j = \tau - d_{vw} + 1}^{\tau + d_{vw}} 1\{A_{j}^{w} = a\}\notag\\
    &\leq T_{aw}\left(\tau - d_{vw}\right) + 2d_{vw} \label{equation: follower-random-follower-notrandom}
\end{align}
Notably, $(\star)$ follows from analysing the two possible cases: 
\begin{itemize}
    \item When $t - d_{vw} \geq \tau_{w} = \tau + d_{vw}$, the inequality is actually an equality because $\tau_{w}$ is the last round follower $w$ plays this action up to and including the $t$-th round.
    \item Otherwise, $t - d_{vw} < \tau_{w} = \tau + d_{vw}$ and we are extending the summation to include more rounds which gives us the inequality. 
\end{itemize}
Summing Equation \eqref{equation: follower-random-follower-notrandom} over all agents and substituting into Equation \eqref{equation: pseudo-to-group} gives us: 
\begin{align*}
    \sum_{w = 1}^{m} T_{aw}\left(t - d_{vw}\right)
    &\leq \sum_{w = 1}^{m}T_{aw}\left(\tau - d_{vw}\right) + 2\sum_{w = 1}^{m}d_{vw} \tag{Equation \eqref{equation: follower-random-follower-notrandom}}\\
    &\leq T_{a}'\left(s_{t}\right) + 2\sum_{w = 1}^{m}d_{vw} \tag{Equation \eqref{equation: pseudo-to-group}}
\end{align*}
as required. 

\subsection{Proof of Theorem \ref{theorem: group-regret}}\label{appendix: proof-group-regret}
Within Section \ref{section: black-box-reduction}, we provided a nearly complete proof. Here, we present each step for completeness. Firstly, we can rewrite the group plays for action $a$ at the end of the final round as follows:
\begin{align}
    \sum_{w = 1}^{m} T_{aw}\left(n\right)
    &= T_{av}\left(n\right) + \sum_{w \neq v}T_{aw}\left(n\right)\notag\\
    &= T_{av}\left(n\right) + \adjustlimits  \sum_{w \neq v} \sum_{t = 1}^{n} 1\{A_{t}^{w} = a\}\notag\\
    &= T_{av}\left(n\right) + \adjustlimits  \sum_{w \neq v} \sum_{t = 1}^{n - d_{vw}} 1\{A_{t}^{w} = a\} + \sum_{w \neq v} \sum_{t = n - d_{vw} + 1}^{n } 1\{A_{t}^{w} = a\} \notag\\
    &= T_{av}\left(n\right) + \sum_{w \neq v} T_{aw}\left(n - d_{vw}\right) + \sum_{w \neq v} \sum_{t = n - d_{vw} + 1}^{n } 1\{A_{t}^{w} = a\}\notag\\
    &\leq T_{av}\left(n\right) + \sum_{w \neq v} T_{aw}\left(n - d_{vw}\right) + \sum_{w \neq v} d_{vw} \label{equation: leader-follow-decomp}
\end{align}
Combining Equation \eqref{equation: leader-follow-decomp} with Lemma \ref{lemma: utilisation} allows us to upper bound the group plays by the number of times the leader plays the action in the queued version of the environment:
\begin{align}
    \sum_{w = 1}^{m} T_{aw}\left(n\right) 
    &\leq T_{av}\left(n\right) + \sum_{w \neq v} T_{av}\left(n - d_{vw}\right) + \sum_{w \neq v} d_{vw} \tag{Equation \eqref{equation: leader-follow-decomp}}\\
    &= \sum_{w = 1}^{m} T_{aw}\left(n - d_{vw}\right) + \sum_{w = 1}^{m} d_{vw}  \tag{Since $d_{vv} = 0$}\\
    &\leq T_{av}'\left(s_{n}\right) + 3 \sum_{w \neq v} d_{vw} \label{equation: group-to-pseudo}
\end{align} 
where the final line follows from Lemma \ref{lemma: utilisation}. Plugging Equation \eqref{equation: group-to-pseudo} allows us to get an upper bound on the group regret: 
\begin{align}
    R_{G}\left(n\right)
    &= \sum_{a \neq \star} \Delta_{a}\,\mathbb{E}\left[\sum_{w = 1}^{m} T_{aw}\left(n\right)\right]\notag\\
    &\leq \sum_{a \neq \star} \Delta_{a}\,\mathbb{E}\left[T_{av}'\left(s_{n}\right) + 3 \sum_{w \neq v} d_{vw}\right] \tag{Equation \eqref{equation: group-to-pseudo}}\\
    &= \sum_{a \neq \star} \Delta_{a}\,\mathbb{E}\left[T_{av}'\left(s_{n}\right)\right] + \left[3 \sum_{w \neq v} d_{vw}\right] \sum_{a \neq \star} \Delta_{a}\notag\\
    &\leq \sum_{a \neq \star} \Delta_{a}\,\mathbb{E}\left[T_{av}'\left(m n\right)\right] + \left[3 \sum_{w \neq v} d_{vw}\right] \sum_{a \neq \star} \Delta_{a} \tag{Since $s_{n} \leq mn$}\\
    &= S_{\pi}\left(mn\right) + \left[3 \sum_{w \neq v} d_{vw}\right] \sum_{a \neq \star} \Delta_{a}
\end{align}
where the final line follows from Lemma \ref{lemma: simulation}.

\section{Regret Bound Comparison}\label{appendix: comparison}
Our theoretical results span several multi-agent bandit problems, which include: subgaussian rewards, heavy-tailed rewards, duelling bandits and bandits with local differential privacy, amongst any other bandit problems that satisfied Assumption \ref{assumption: bandit-environment}. Here, we compare our bounds to existing works in the literature. 

\subsection{Subgaussian Rewards}\label{appendix: subgaussian}
Several works design algorithms under a subgaussian assumption on the reward distributions \citep{Landgren2016, Rubio2019, Lalitha2021}. \citet{Rubio2019} show that their guarantees improve upon those shown by \citet{Landgren2016}. \citet{Lalitha2021} remark that their analysis does not provide guarantees as tight as those found in previous works \citep{Landgren2016, Rubio2019}. Thus, we focus on a comparison with DDUCB of \citet{Rubio2019}.\\

\cite{Rubio2019} use a gossiping procedure. This requires a communication matrix, $P \in \mathbb{R}^{m\times m}$. Notably, $P$ is doubly stochastic and must respect the structure of the graph, e.g. $P_{vw} = 0$ if $(v, w) \not\in E$. Throughout, we will require the eigenvalues of the $P$ and follow the authors in assuming that the eigenvalues of this matrix satisfy the following: 
$$1 = \lambda_{1}\left(P\right) > \abs{\lambda_{2}\left(P\right)} \geq \abs{\lambda_{3}\left(P\right)} \geq \cdots \geq \abs{\lambda_{m}\left(P\right)} \geq 0$$

Theorem 3.2 of \citet{Rubio2019} prove the following upper bound on the group regret of their algorithm, DDUCB:
$$
    \sum_{a \neq \star}\frac{16 \eta \left(1 + \frac{\epsilon}{2}\right) \sigma^{2}\ln\left(mn\right)}{\Delta_{a}} + \left(C_{G} + m + 4\right) \sum_{a \neq \star} \Delta_{a}
$$
Here, $\eta > 1$ and $\epsilon \in (0, \nicefrac{\eta - 1}{7(\eta + 1)}$ are tune-able hyperparameters and $C_{G}$ is a graph-dependent quantity defined as: 
$$
    C_{G} = \left\lceil \frac{6m \ln\left(\frac{2m}{\epsilon}\right)}{\sqrt{2\ln\abs{\lambda}^{-1}}} \right\rceil.
$$
Corollary \ref{corollary: quack-ucb} shows that the group regret for running QuACK with the upper confidence bound algorithm yields the following bound on the group regret:
$$
    R_{G}\left(n\right) \leq \sum_{a\neq \star}\frac{16\sigma^{2}\ln\left(mn\right)}{\Delta_{a}} + \left(3 + 3\sum_{w = 1}^{m} d_{vw}\right)\sum_{a \neq \star}\Delta_{a}
$$
Inspecting the leading order terms reveals:
$$
    16\eta\left(1 + \frac{\epsilon}{2}\right) \sigma^{2} \ln\left(mn\right) > 16 \sigma^{2} \ln\left(mn\right)
$$
where the inequality follows from the fact that $\eta > 1$ and $\epsilon > 0$. Thus, we obtain a strictly better constant multiplying the leading order term.

The additive graph dependent terms are different in each bound. Although, we remark that these terms become negligible for $n$ sufficiently large, it is still interesting to compare them.
To compare these additive graph dependent terms, we consider specific graph structures, such as regular and star graphs. 
Throughout, we will assume that the communication matrix is chosen as-per the recommendations of \citet{Rubio2019}: 
$$
    P = I - \frac{1}{1 + \max_{v \in V}\abs{N_{v}}}\left(D - M\right)
$$
where $D$ and $M$ denote the degree matrix and the adjacency matrix of the graph. Formally, $D$ and $M$ have the following element-wise definitions: 
\begin{align*}
    D_{vw} = 
    \begin{cases}
        \abs{N_{v}} & \text{ if } w = v\\
        0 & \text{ otherwise}
    \end{cases}
    \quad \text{and}\quad
    M_{vw} 
    = \begin{cases}
        1 & \text{ if } (v, w) \in E\\
        0 & \text{ otherwise}
    \end{cases}
\end{align*}

\paragraph{Regular Graphs.} Graphs are called $\delta$-regular if every agent has exactly $\delta$ neighbours, e.g. $\abs{N_{v}} = \delta$ for all $v \in V$. For these graphs, the expression for $P$ simplifies and allows us to obtain an expression for the spectral term found in the graph-dependent term of \citet{Rubio2019}: 
$$
    P = \frac{I + M}{1 + \delta} \implies \lambda \coloneqq \lambda_{2}\left(P\right) = \frac{1 + \lambda_{2}\left(M\right)}{1 + \delta}
$$
\citet{Chung1989} provide an upper bound on $d$ for regular graphs, which we can substitute into the graph-dependent term in the group regret of QuACK-UCB:
\begin{align*}
    3 + 3 \sum_{w = 1}^{m} d_{vw} 
    &\leq 3 + 3 d \left(m - 1\right)\\
    &\leq 3 + \left\lceil \frac{3\left(m - 1\right) \ln\left(m - 1\right)}{\ln\left(\frac{1 + \delta}{\abs{1 + \lambda_{2}\left(M\right)}}\right)}\right\rceil\\
    &= 3 + \left\lceil \frac{3\left(m - 1\right) \ln\left(m - 1\right)}{\ln\abs{\lambda}^{-1}}\right\rceil\\
    &= 3 + \left\lceil \frac{6\left(m - 1\right) \ln\left(m - 1\right)}{\sqrt{2\ln\abs{\lambda}^{-1}}\sqrt{2\ln\abs{\lambda}^{-1}}}\right\rceil\\
    &\stackrel{(\star)}{<} 3 + \left\lceil \frac{6\left(m - 1\right) \ln\left(\frac{2m}{\epsilon}\right)}{\sqrt{2\ln\abs{\lambda}^{-1}}\sqrt{2\ln\abs{\lambda}^{-1}}}\right\rceil\\
    &= 3 + \frac{C_{G} - 6\ln (m - 1)}{\sqrt{2\ln \abs{\lambda}^{-1}}}\\
    &< \frac{C_{G}}{\sqrt{2\ln \abs{\lambda}^{-1}}} + m + 4 
\end{align*}
where $(\star)$ follows from the fact that $\ln(2m/\epsilon) < \ln (m - 1)$ for $\epsilon \in (0, 1)$. This establishes a relationship between an upper bound on the graph-dependence of our algorithm with the graph-dependence of \citet{Rubio2019}. Generally, this suggests that the graph-dependence of our leader-based approach is smaller for $\delta$-regular graphs where the communication matrix has large values for $\abs{\lambda}$. Hence, in these cases, the graph-dependent terms in our bound on the group regret are smaller. 
   
\paragraph{Star Graphs.} The star graph consists of $m - 1$ vertices connected to a single central vertex. Here, we can exactly evaluate the spectrum of the communication matrix. Notably, the spectrum of $L \coloneqq D - M$ is known for this graph \citep{Chung1997}:
$$
    \lambda\left(L\right) =
    \begin{cases}
        m & \text{ with multiplicity } 1\\
        1 & \text{ with multiplicity } m - 2\\
        0 & \text{ with multiplicity } 1
    \end{cases}
$$
Therefore, the spectrum of the communication matrix $P$ can be found by plugging in these quantities:
$$
    \lambda\left(P\right) = 1 - \frac{\lambda\left(L\right)}{m} = 
    \begin{cases}
        0 & \text{ with multiplicity }1\\
        \frac{m - 1}{m} & \text{ with multiplicity } m - 2\\
        1 & \text{ with multiplicity } 1
    \end{cases}
$$
which gives us $\lambda = \lambda_{2}(P) = \nicefrac{m - 1}{m}$. Plugging this into their graph-dependent term yields:
\begin{align*}
    C_{G} + m + 4 
    &= \left\lceil\frac{6m\ln\left(\frac{m}{\epsilon}\right)}{\sqrt{2\ln\left(\frac{m}{m - 1}\right)}}\right\rceil + m + 4 \\
    &\geq \frac{6m \ln\left(m\right)}{\sqrt{2\ln\left(\frac{m}{m - 1}\right)}} + m + 4\\
    &> 7m + 4
\end{align*}
where the final inequality follows from the fact that $m \geq 2$ is required for the multi-agent setting. Running Algorithm \ref{algorithm: best-leader} will choose the central vertex as the leading agent and this will give make the graph-dependence in our algorithm: 
$$
    3 + \sum_{w = 1}^{m} d_{vw} = m + 2
$$
which is strictly smaller than that of \citet{Rubio2019}.

\subsection{Heavy-Tailed Rewards}\label{appendix: heavy-compare}
\citet{Dubey2020} design a multi-agent extension of the robust upper confidence bound algorithm using the truncated mean estimator. To the best of our knowledge, this is the only work addressing heavy-tailed environments in the multi-agent setting. Before stating their theoretical results, we first define the independence number of a graph: 
$$
    \alpha\left(G\right) = \max_{S \subseteq V}\left\{\abs{S}: (v, w) \not\in E \text{ for all } (v, w) \in S\right\}
$$
which is the size of the largest subset of vertices where no two vertices are adjacent. \cite{Dubey2020} develop several algorithms, and their best guarantee on the group regret up to absolute constants is given by: 
$$
    \alpha\left(G_{\gamma}\right)\sum_{a \neq \star} \frac{ \sigma^{\frac{1}{\epsilon}}\ln\left(n\right)}{\Delta_{a}^{\frac{1}{\epsilon}}} + \left( m \gamma\cdot \alpha\left(G_{\gamma}\right) + \alpha\left(G_{\gamma}\right) + m \right)\sum_{a \neq \star} \Delta_{a}
$$
Here, $\epsilon$ controls the heaviness of the tails for the reward distributions, $\gamma$ is a parameter their algorithm takes as input that governs how far each agent can communicate, and the $\gamma$-th power of the graph is defined as: 
$$
    G_{\gamma} = (V, E_{\gamma}) \; \text{ where } \; E_{\gamma} = 1\{(v, w) \in V \times V: d_{vw} \leq \gamma\}
$$
which adds an edge to the graph whenever $d_{vw} \leq \gamma$. From Corollary \ref{corollary: quack-heavy}, the group regret of running QuACK with the robust upper confidence bound using the truncated-mean estimator has the following upper bound: 
$$
    R_{G}\left(n\right) \leq \mathcal{O}\left(\sum_{a \neq \star} \frac{\sigma^{\frac{1}{\epsilon}} \ln\left(mn\right) }{\Delta_{a}^{\frac{1}{\epsilon}}} + \left(\sum_{w = 1}^{m} d_{vw}\right)\sum_{a \neq \star}\Delta_{a}\right)
$$
Thus, we obtain a smaller leading order term whenever:
$$
    n^{\alpha\left(G_{\gamma}\right) - 1} \geq m
$$
Notably, $\alpha\left(G_{\gamma}\right) \in [1, m]$ for all graphs. Therefore, our graph-dependence is smaller whenever $\alpha\left(G_{\gamma}\right) > 1$ and $m < n$, i.e. the number of agents is smaller than the number of rounds.\footnote{Note that $m > n$ implies that all multi-agent bounds on the group regret become linear in the horizon.} Specifying $\gamma \geq d$ as the input parameter will yields an independence number of $1$ in the bound \citet{Dubey2020}. However, their analysis is specific to the robust upper confidence bound algorithm and the truncated mean estimator, whereas Theorem \ref{theorem: group-regret} is strictly more general.\\

\citet{Dubey2020} have an additive graph-dependent term that displays a trade-off between the input parameter $\gamma$ and the independent number. Generally, increasing $\gamma$ will decrease the independence number and decreasing $\gamma$ will increase the independent number. Generally, we expect their additive term to be comparable to ours in many cases. For example, choosing $\gamma = d$ will minimise the leading order term in the regret bound and yields an additive graph-dependence of: 
$$
    m \gamma \cdot \alpha\left(G_{\gamma}\right) + \alpha\left(G_{\gamma}\right) + m = m d + m + 1
$$
Conversely Corollary \ref{corollary: quack-heavy} shows that our additive graph-dependence of the order: 
$$
    \sum_{w = 1}^{m} d_{vw} \leq m \left(d - 1\right)
$$ 

\subsection{Duelling Bandits}
\citet{Raveh2024} design a multi-agent extension of various duelling bandit algorithms. To the best of our knowledge, this is this only work addressing the multi-agent duelling bandit problem. Their best guarantee is for an extension of the relative upper confidence bound algorithm \citep{Zoghi2014}. Up to absolute constants, they show that this algorithm achieves an upper bound on the group regret given by:
$$
    \chi\left(G_{\gamma}\right) \sum_{a \neq \star} \frac{\ln\left(n\right)}{\tilde{\Delta}_{a}} + m^{2} k^{2}\left(1 + \ln\left(1 + \abs{N_{\star}^{\gamma}}\right) \right)\tilde{\Delta}_{\max} + \left(1 + \gamma\right) k m \tilde{\Delta}_{\max} 
$$
where $1 \leq \chi(G_{\gamma}) \leq m$ denotes the clique covering number of the $\gamma$-th power of the graph and $\gamma$ is an input parameter. From Corollary \ref{corollary: quack-duelling}, we can upper bound the group regret when running QuACK with the relative upper confidence bound algorithm: 
$$
    R_{G}\left(n\right)
    \leq \mathcal{O}\left(\sum_{a \neq \star} \frac{\ln\left(mn\right)}{\tilde \Delta_{a}} + \left(\sum_{w = 1}^{m} d_{vw}\right)\sum_{a \neq \star}\tilde \Delta_{a}\right)
$$
Comparing the leading-order term reveals that we obtain a strictly better leading order term whenever: 
$$
    n^{\chi\left(G_{\gamma}\right) - 1} \geq m
$$

Notably, $\chi\left(G_{\gamma}\right) \in [1, m]$ for all graphs. Therefore, our graph-dependence is smaller whenever $\chi\left(G_{\gamma}\right) > 1$ and $m < n$, i.e. the number of agents is smaller than the number of rounds. Specifying $\gamma \geq d$ as the input parameter will yield a clique covering number of $1$ in the bound of \citet{Raveh2024}. However, their analysis is specific to the relative upper confidence bound algorithm, whereas Theorem \ref{theorem: group-regret} is strictly more general.\\

\citet{Raveh2024} have an additive graph-dependent term that is strictly larger. Notably, their graph-dependence is always quadratic in the network size, whereas ours scales as $m d$. Notably, $d < m$ in connected graphs. 



\section{Additional Experiments}\label{appendix: additional-experiments}
Experimentally, we seek to compare our black-box reduction to existing works, who design and analyse extensions of well-known single agent bandit algorithms. Our simulations were conducted on a personal machine (Intel i5-10310U CPU @ 1.70Ghz x 8). Furthermore, all results are averages over $100$ independent runs. 

\subsection{Subgaussian Experiments}
Here, we describe the missing experimental details for the experiments found in the main paper, such as the hyperparameters for existing algorithms. Firstly, \citet{Landgren2016}, \citet{Rubio2019} and \citet{Lalitha2021} are all gossip-based algorithms. Thus, they all require the specification of a communication matrix. All authors recommend choosing this matrix based on the graph Laplacian. Formally, they use the following doubly stochastic matrix:
$$
    P = I - \frac{1}{1 + \max_{v \in V}\abs{N_{v}}}\left(D - M\right)
$$
where $M$, $D$ and $\delta$ denote the adjacency matrix, the degree matrix and the maximum degree of the underlying graph. Below, we provide specific details on additional hyperparameters of each algorithm.
\begin{itemize}
    \item \citet{Landgren2016} (coop-UCB) has been implemented by other researchers \citep{Rubio2019}. Their algorithm requires an exploration hyperparameter $\gamma > 1$; we choose this parameter based on the empirical results of existing work \citep{Rubio2019, Lalitha2021}.
    \begin{itemize}
        \item [\tiny$\bullet$] Exploration Parameter: $\gamma = 1.01$.
    \end{itemize}
    \item \citet{Rubio2019} (DDUCB) provide an implementation of their algorithm on \href{https://github.com/damaru2/decentralized-bandits}{Github}. Their algorithm requires specifying two hyperparameters; we use the values stated in their theorems and chosen in their experiments.
    \begin{itemize}
        \item [\tiny$\bullet$] Exploration Parameter (Theorem 3.2 of \citet{Rubio2019}): $\eta = 2$.
        \item [\tiny$\bullet$] Approximation Parameter (Theorem 3.2 of \citet{Rubio2019}): $\epsilon = \nicefrac{1}{22}$.
    \end{itemize}
    \item \citet{Lalitha2021} (DecTS) provide an implementation of their algorithm on \href{https://github.com/anushalalitha5/Decentralized-Thompson-Sampling}{Github}. Their algorithm requires the specification of a learning rate. We use the value found in the theoretical results that they use in their experiments. 
    \begin{itemize}
        \item [\tiny$\bullet$] Learning Rate (Theorem 1 of \citet{Lalitha2021}): $\eta = m$.
    \end{itemize}
    \item Algorithm \ref{algorithm: our-algorithm} requires the specification of a leader and a single-player bandit algorithm.
    \begin{itemize}
        \item [\tiny$\bullet$] Leader: As Theorem \ref{theorem: group-regret} suggests, we select the leader as the solution to the graph median problem for each experimental setting. We do so using Algorithm \ref{algorithm: best-leader}.
        \item [\tiny$\bullet$] Bandit Algorithm: Our goal is to compare how well our black-box reduction performs relative to existing works, who design and analyse specific algorithms designed for the setting. Therefore, we consider UCB \citep{Auer2002} and Thompson Sampling \citep{Thompson1933} to compare with existing literature. 
    \end{itemize}
\end{itemize}

Our experiments consider three network structures, namely cycle, grid and star graphs with varying network sizes. Figure \ref{figure: group-regret-196} in the main paper shows the experimental results for the largest network size of $m = 196$ agents. Here, we present the experimental results for smaller network sizes. 

\paragraph{Cycle Graph.}
The cycle graph consists of agents organised in a circle who are connected to their clockwise and anti-clockwise neighbours. Thus, the graph is $2$-regular. Section \ref{section: experimental-results} presents the empirical results for a cycle with $m = 196$ agents. Figure \ref{figure: cycle-regret} displays the group regret for cycle graphs with varying numbers of agents. 
\begin{figure*}[h!]
    \centering
    \includegraphics[]{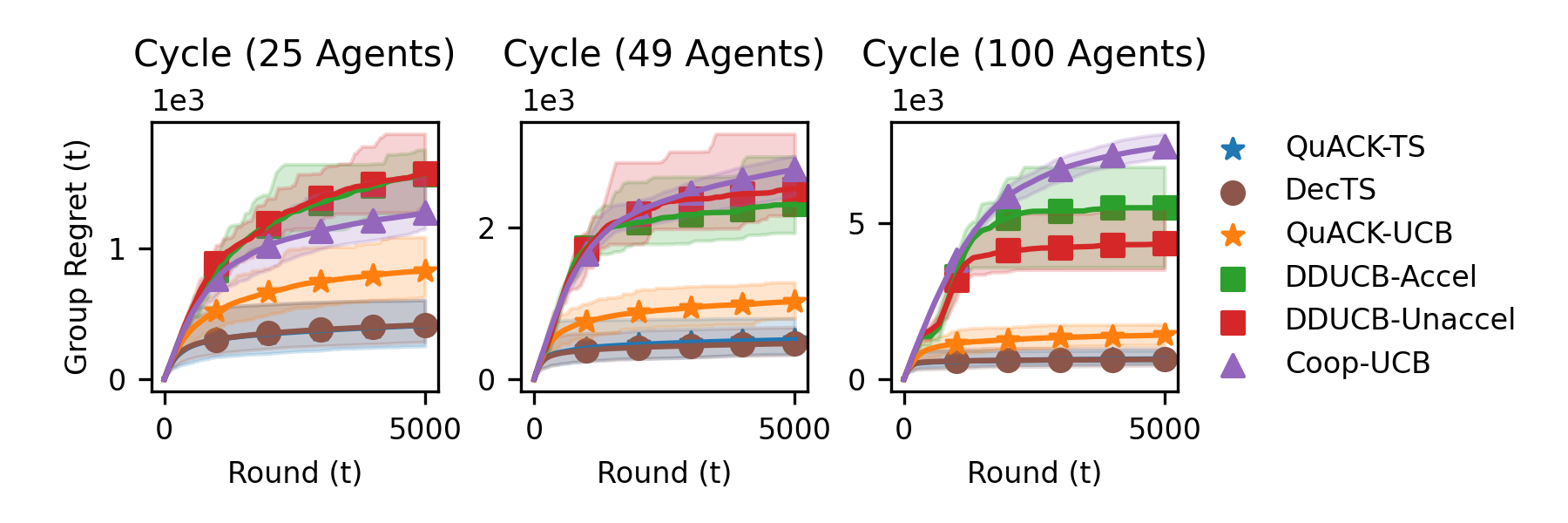}
    \caption{Group Regret for Cycle Graphs.}
    \label{figure: cycle-regret}
\end{figure*}

Figure \ref{figure: cycle-regret}, shows that QuACK-UCB outperforms the other UCB-based algorithms on all network sizes. As Section \ref{appendix: subgaussian} suggests, the performance gap increases with the size of the network. Furthermore, QuACK-TS is competitive with DecTS. 

\paragraph{Grid Graph.}
The grid graph consists of agents organised in a square lattice and each agent can have two, three or four neighbours. Section \ref{section: experimental-results} presents the results of our experiments for a grid graph with $m = 196$ agents. Figure \ref{figure: grid-regret} displays the group regret for grid graphs with varying numbers of agents. 
\begin{figure*}[h!]
    \centering
    \includegraphics[]{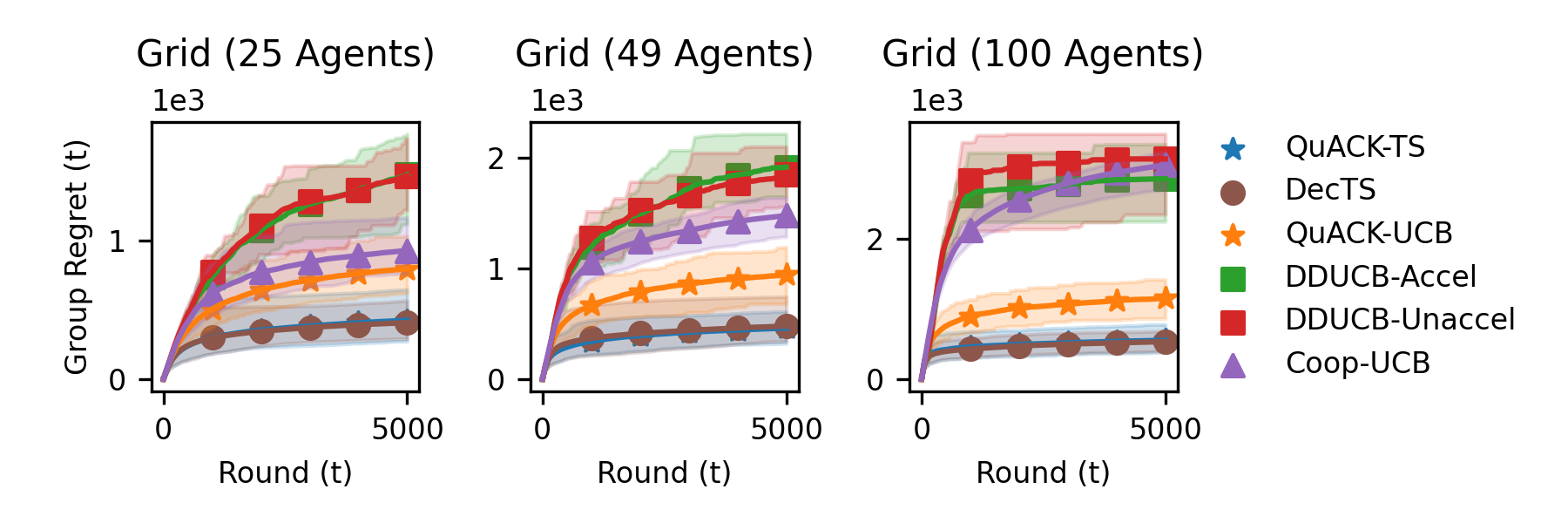}
    \caption{Group Regret for Grid Graphs.}
    \label{figure: grid-regret}
\end{figure*}

Here, QuACK-UCB out performs the other UCB-based algorithms on all network sizes. Furthermore, QuACK-TS is competitive with DecTS. 
\newpage
\paragraph{Star Graph.}
The star graph is a common sub-structure that appears in social networks. This particular graph consists of a single central vertex, who is connected to all other vertices. The non-central vertices have exactly one neighbour, which is the central vertex.  Section \ref{section: experimental-results} presents the empirical results for a star with $m = 196$ agents. However, we also conduct experiments on smaller stars to verify our theoretical results. Figure \ref{figure: grid-regret} displays the group regret for cycle graphs with varying numbers of agents. 
\begin{figure*}[h!]
    \centering
    \includegraphics[]{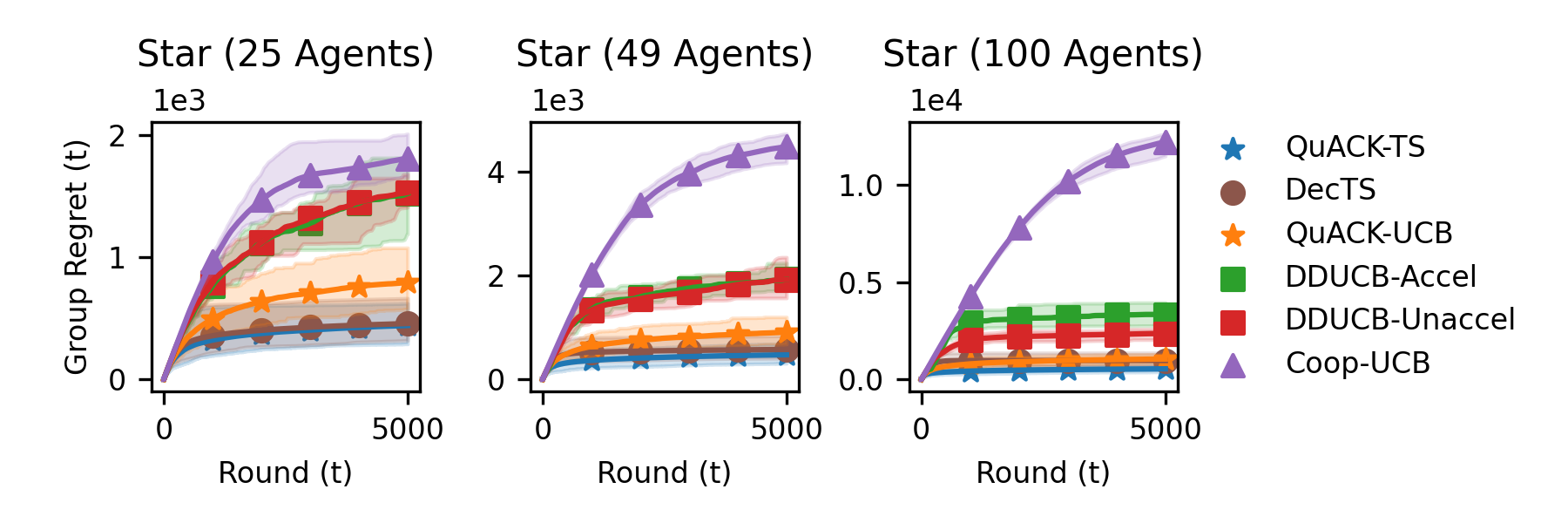}
    \caption{Group Regret for Star Graphs.}
    \label{figure: star-regret}
\end{figure*}

Here, QuACK-UCB out performs the other UCB-based algorithms on all network sizes. Furthermore, QuACK-TS is competitive with DecTS on the smallest star network. However, QuACK-TS achieves smaller group regret for larger star networks.

\paragraph{Network Scaling.}
Finally, Figure \ref{figure: scaling} visualises the group regret at the end of the final round as a function of the network size for each graph structure. 
\begin{figure*}[h!]
    \centering
    \includegraphics[]{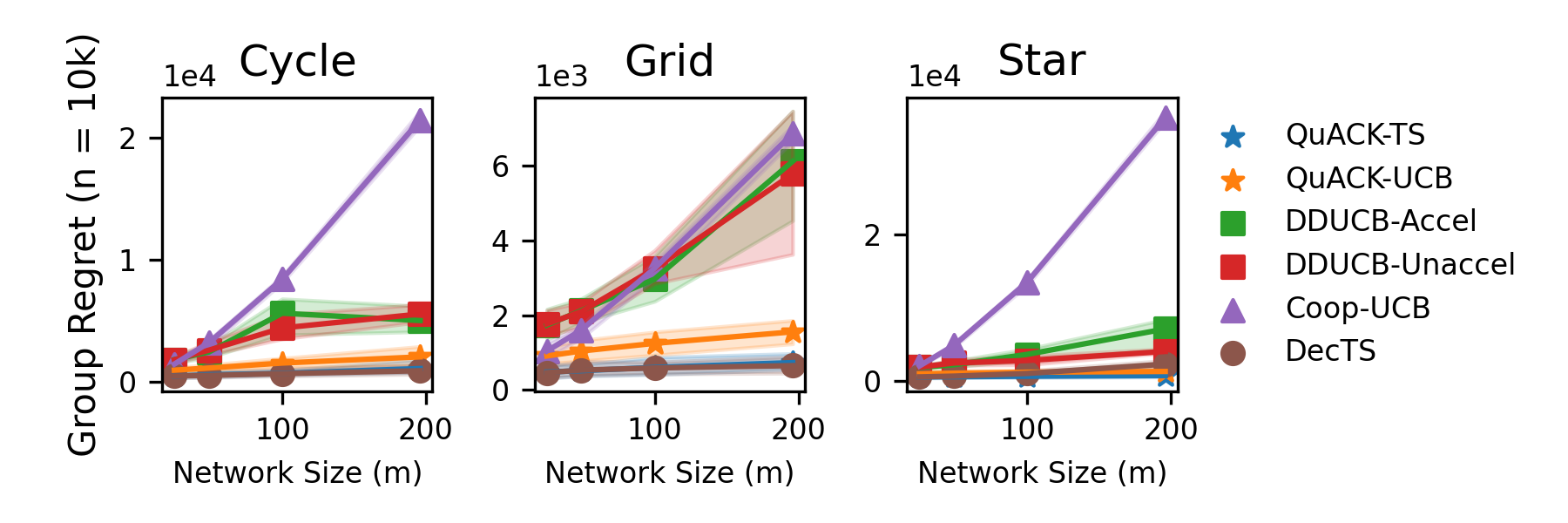}
    \caption{Network Size Scaling of Group Regret.}
    \label{figure: scaling}
\end{figure*}

Notably, we can see the group regret of Coop-UCB and DDUCB scales quickly with the network size for each graph. Conversely, the group regret of QuACK-UCB, QuACK-TS and Dec-TS have a shallower gradient, indicating a better dependence on the network size. 

\subsection{Heavy-Tailed Experiments}

Following \citet{Dubey2020}, we conduct our experiments on standard $\alpha$-stable densities. Specifically, $\alpha = 1.9$ in our experiments and we select the location of the densities such that:
$$
    \mu_{a} = 
    \begin{cases}
        0.7 & \text{ if } a = 1\\
        0.4 & \text{ if } a \neq 1
    \end{cases}
$$
where $\mathcal{A} = \{1, 2, \cdots, 5\}$. Our experiments in this setting consider the cycle and grid network structures with $m \in \{16, 36, 64\}$ agents. These experiments use a smaller number of agents and fewer rounds because the computational complexity of computing the truncated mean estimator grows linearly with the number of reward samples for each action.\footnote{The number of rewards for each action depends implicitly on the network size in the multi-agent setting.}\\

Below, we provide specific details of the hyperparameters used for each algorithms. 
\begin{itemize}
    \item \citet{Dubey2020} obtain the best theoretical guarantees for CMP-UCB and this algorithm partitions the graph into disjoint subsets. Each of these subset has one leader that uses the robust upper confidence bound algorithm. Following their guidance, we select the leaders such that they form the maximal weighted independent set of each graph.
    \begin{itemize}
        \item [\tiny$\bullet$] We tune the hyperparameter $\gamma$ of their algorithm such that the networks under consideration have $1$, $2$ and $4$ leaders. With $1$ leader we expect their algorithm to outperform ours. With $\geq 2$ leaders, we expect our algorithm to perform best.
    \end{itemize}
    \item Algorithm \ref{algorithm: our-algorithm} requires a single-agent heavy-tailed bandit algorithm as input. For a direct comparison, we choose the robust upper confidence bound algorithm with the truncated mean. We select the leader as the $\argmin$ of the graph-median problem.
\end{itemize}

\begin{figure*}[h!]
    \centering
    \includegraphics[]{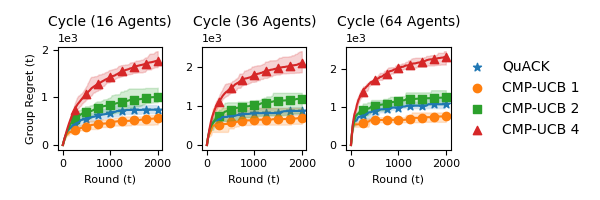}
    \includegraphics[]{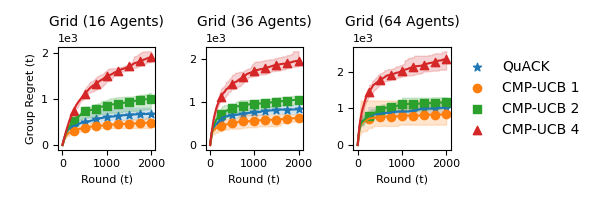}
    \caption{Group Regret for Cycle and Grid Graphs.}
    \label{figure: heavy-experiments}
\end{figure*}

Figure \ref{figure: heavy-experiments} displays the group regret for the various cycle and grid graphs. From Section \ref{appendix: heavy-compare}, we expect QuACK will perform better than CMP-UCB when it receives a $\gamma$ such that the resulting graph has independence number is greater than $1$. Notably, our experimental results align exactly with our theoretical predictions.
\end{appendix}
\end{document}